\ificcvfinal\pagestyle{plain}\fi
\declaretheorem[name=Lemma, numberwithin=section]{lemma}
\begin{document}

\title{Enhancing Adversarial Robustness in Low-Label Regime \\ via Adaptively Weighted Regularization and Knowledge Distillation}

\author{Dongyoon Yang\\
Seoul National University\\
Department of Statistics\\
{\tt\small ydy0415@gmail.com}
\and
Insung Kong\\
Seoul National University\\
Department of Statistics\\
{\tt\small ggong369@snu.ac.kr}
\and
Yongdai Kim\\
Seoul National University\\
Department of Statistics\\
{\tt\small ydkim0903@gmail.com}
}

\maketitle

\begin{abstract}
Adversarial robustness is a research area that has recently received a lot of attention in the quest for trustworthy artificial intelligence.
However, recent works on adversarial robustness have focused on supervised learning
where it is assumed that  labeled data is plentiful.
In this paper, we investigate semi-supervised adversarial training where labeled data is scarce.
We derive two upper bounds for the robust risk and propose a regularization term for unlabeled data motivated by these two upper bounds.
Then, we develop a semi-supervised adversarial training algorithm that combines the proposed regularization term with knowledge distillation using a semi-supervised teacher (i.e., a teacher model trained using a semi-supervised learning algorithm).
Our experiments show that our proposed algorithm achieves state-of-the-art performance with significant margins compared to existing algorithms.
In particular, compared to supervised learning algorithms, performance of our proposed algorithm is not much worse even when the amount of labeled data is very small.
For example, our algorithm with only 8\% labeled data is comparable to supervised adversarial training algorithms that use all labeled data, both in terms of standard and robust accuracies on CIFAR-10.
\end{abstract}


\section{Introduction}
\label{intro}

Neural network models used for image classification are vulnerable to adversarial perturbations that are imperceptible to humans \cite{szegedy2014intriguing}. These perturbed images are called \textit{adversarial examples}, and they can be generated without any knowledge of the underlying model, leading to security concerns \cite{papernot2016transferability, papernot2017practical, chen2017zoo, ilyas2018blackbox, papernot2016science}. Adversarial examples can also cause problems in real-world scenarios, where printed images with adversarial perturbations can easily fool the classification model \cite{kurakin2016adversarial}.
To defend against adversarial attacks, many adversarial learning algorithms have been proposed, such as \cite{madry2018towards, zhang2019theoretically, wang2020improving, zhang2020attacks, zhang2021geometry, rade2022reducing}. 


Learning accurate prediction models typically requires a large amount of labeled data, which can be expensive and time-consuming to collect. In contrast, obtaining unlabeled data is relatively easier. Semi-supervised learning is a research area that focuses on effectively utilizing unlabeled data \cite{miyato2017virtual, berthelot2019mixmatch, unsupervised2020xie, sohn2020fixmatch, zhang2021flexmatch}.
Because adversarial training algorithms also require labeled data,
semi-supervised adversarial training (SS-AT) has become a crucial research area for reliable artificial intelligence \cite{carmon2019unlabeled, uesato2019are, zhai2019adversarially, zhang2021armoured}. 

The aim of this paper is to develop a new SS-AT algorithm that is theoretically well motivated and empirically superior to existing competitors when the amount of labeled data is insufficient. We propose an objective function for SS-AT, which consists of three key components - one for generalization of labeled data, a regularization term with unlabeled data for adversarial robustness, and a knowledge distillation term for improving generalization ability of unlabeled data.
Our regularization term is motivated by two new upper bounds of the boundary risk.
The knowledge distillation term is added to estimate soft pseudo labels of unlabeled data, which
contrasts with existing algorithms \cite{uesato2019are, carmon2019unlabeled, zhai2019adversarially} that use hard pseudo labels. 
Using soft pseudo labels instead of hard pseudo labels improves the performance significantly.

Our contributions can be summarized as follows:
\begin{itemize}
    \item We derive two upper bounds of the robust risk for semi-supervised adversarial training, providing theoretical insights into the performance of proposed method.
    \item We propose a novel semi-supervised adversarial training algorithm, called Semisupervised-Robust-Self-Training with Adaptively Weighted Regularization (SRST-AWR), that combining an adaptively weighted regularization and 
    knowledge distillation with a semi-supervised teacher to put soft pseudo labels in adversarial training.
    \item We demonstrate the effectiveness of our algorithm through numerical experiments on various benchmark datasets, showing simultaneous improvements in robustness and generalization with significant performance gains over existing state-of-the-art methods.
    \item Our proposed algorithm exhibits only minor performance degradation even when the amount of labeled data is limited compared to fully supervised methods with a large amount of labeled data.
\end{itemize}

\section{Preliminaries}
\label{sec2}

Let $\mathcal{X} \subset \mathbb{R}^d$ be the input space, $\mathcal{Y} = \left\{1, \cdots, C\right\}$
be the set of output labels and $f_{\bm{\theta}} : \mathcal{X} \rightarrow \mathbb{R}^{C}$ be the score function parametrized by parameters $\bm{\theta}$ such that $\mathbf{p}_{\theta}(\cdot|\bm{x}) =\operatorname{softmax}(f_{\bm{\theta}}(\bm{x})) \in \mathbb{R}^C$ is the vector of the predictive 
conditional probabilities. Let $F_{\bm{\theta}}(\bm{x}) = \underset{c}{\mathrm{argmax}} [f_{\bm{\theta}}(\bm{x})]_c \in \mathbb{R}^C ,$  
$\mathcal{B}_{p}(\bm{x}, \varepsilon) = \left\{\bm{x}' \in \mathcal{X} : \lVert \bm{x}- \bm{x}' \rVert_p \leq \varepsilon \right\}$ and Let $\mathbbm{1}\{\cdot\}$ represent the indicator function, which takes the value $1$ when the condition $\cdot$ is satisfied, and $0$ otherwise.

\subsection{Population Robust Risk}

The population robust risk used in adversarial training is defined as
\begin{equation}
\label{rob-pop}
\mathcal{R}_{\text{rob}}(\theta)=\mathbb{E}_{\mathbf{(X,Y)}} \;  \underset{\mathbf{X'} \in \mathcal{B}_p(\mathbf{X}, \varepsilon) }{\max\;\;\;} \mathbbm{1}\left\{F_{\theta}(\mathbf{X'}) \neq \mathbf{Y} \right\}.
\end{equation}
The objective of adversarial training is to learn $\bm{\theta}$ that minimizes the population robust risk (\ref{rob-pop}). Most state-of-the-art adversarial training algorithms consist of two steps: a maximization step and a minimization step. In the maximization step, an adversarial example $\bm{x}' \in \mathcal{B}_p(\bm{x}, \varepsilon)$ is generated, which is described in Section \ref{adversarial-attack}.
In the minimization step, 
a certain regularized empirical risk for given adversarial examples is minimized \cite{madry2018towards, zhang2019theoretically, wang2020improving, rade2022reducing}.

\subsection{Adversarial Attack}
\label{adversarial-attack}
An adversarial attack is a method to generate an adversarial example. Adversarial attacks can be categorized into white-box attacks \cite{szegedy2014intriguing, goodfellow2015explaining, madry2018towards} and black-box attacks \cite{papernot2016transferability, papernot2017practical, andriushchenko2020square}. In the white-box attack, it is assumed that the adversary can exploit all information about the model architectures and parameters to generate adversarial examples. In a black-box attack, the adversary can only access the  outputs of the model. 

One of the most popular white-box adversarial attack algorithms is Projected Gradient Descent (PGD), which finds an adversarial example 
by iteratively updating it by the gradient ascent
and projecting onto the $\epsilon$-ball of the original data \cite{madry2018towards}.
The formula of PGD$^{\text{T}}$ is as follows:
\begin{equation}
\footnotesize
\label{pgd}
    \bm{x}^{(T)}=\bm{\Pi}_{\mathcal{B}_{p}(\bm{x}, \varepsilon) }\left(\bm{x}^{(T-1)} + \nu \operatorname{sgn}\left(\nabla_{\bm{x}^{(T-1)}} \eta(\bm{x}^{(T-1)}|\bm{\theta},\bm{x},y)\right)\right),
\end{equation}
where $\bm{\Pi}_{\mathcal{B}_{p}(\bm{x}, \varepsilon)}(\cdot)$ is the projection operator to $\mathcal{B}_{p}(\bm{x}, \varepsilon)$, $\nu>0$ is the step size, $\eta$ is a surrogate loss
, $\bm{x}^{(0)}=\bm{x}$,
$\text{T}$ is the number of iterations.
The cross-entropy or Kullback–Leibler divergence for $\eta$ can be used.

\subsection{Semi-Supervised Learning}

Virtual Adversarial Training (VAT) \cite{miyato2017virtual} is a semi-supervised learning algorithm minimizing
\begin{equation}
\small
\label{vat}
    \frac{1}{n_l} \sum\limits_{i=1}^{n_l} \ell_{\text{ce}}(f_{\theta}(\bm{x}_i), y_i) + \lambda \frac{1}{n_{ul}} \sum\limits_{j=1}^{n_{ul}} \operatorname{D_{KL}} (\mathbf{p}_{\tilde{\theta}}(\cdot|\bm{x}_j)\lVert \mathbf{p}_{\theta }(\cdot|\widehat{\bm{x}}^{\text{adv}}_j)),
\end{equation}
where $\{(\bm{x}_i, y_i)\}^{n_l}_{i=1}$ and $\{\bm{x}_j\}_{j=1}^{n_{ul}} $ are labeled and unlabeled samples, respectively, and
$\tilde{\theta}$ is the pretrained parameter and $\widehat{\bm{x}}^{\text{adv}}_j \in \mathcal{B}_2(\bm{x}_j, \varepsilon)$ is an adversarial example.

FixMatch \cite{sohn2020fixmatch} is a semi-supervised learning algorithm minimizing
\begin{align*}
\small
\label{fixmatch}
    & \frac{1}{n_l} \sum\limits_{i=1}^{n_l} \ell_{\text{ce}}(f_{\theta}(\bm{x}_i), y_i) \\ 
    & + \frac{1}{n_{\tau}} \sum\limits_{j=1}^{n_{ul}} \ell_{\text{ce}} (f_{\bm{\theta}}(\bm{x}^{s}_j), F_{\bm{\theta}}(\bm{x}^{w}_j)) \mathbbm{1}\{\underset{c}{\max}\; p_{\bm{\theta}}(c| \bm{x}^{w}_j) > \tau \},
\end{align*}
where $\tau \in (0,1) $ is a constant, 
$ n_{\tau} = \sum\limits_{j=1}^{n_{ul}} \mathbbm{1}\{\underset{c}{\max}\; p_{\bm{\theta}}(c| \bm{x}^{w}_j)$ $ > \tau \}$ and
$\bm{x}^s_j$ and $\bm{x}^{w}_j$ are strongly and weakly augmented samples \cite{cubuk2018autoaugment}, respectively.

\begin{figure*}[t]
\begin{center}
\begin{minipage}[c]{0.45\linewidth}
\includegraphics[width=\linewidth]{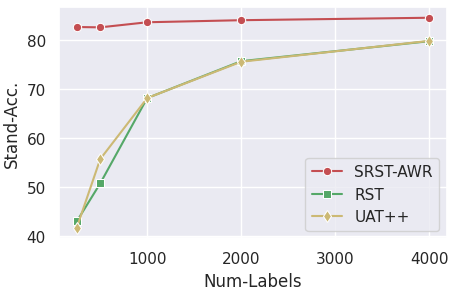}
\end{minipage}
\hfill
\begin{minipage}[c]{0.45\linewidth}
\includegraphics[width=\linewidth]{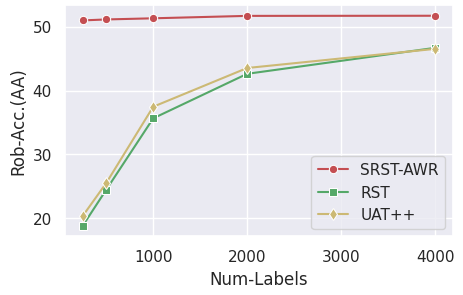}
\end{minipage}
\hfill
\caption{\textbf{Performance comparison of SRST-AWR, RST and UAT++ for varying the number of labeled data}.
The $x$-axis is the number of labeled data and $y$-axis are the standard accuracy and robust accuracy against autoattack, respectively.
}
\label{fig:numlabels}
\end{center}
\end{figure*}
\subsection{Semi-Supervised Adversarial Training}
\label{ssar}

Existing adversarial training algorithms can be categorized into two types: one directly minimizing empirical robust risk (e.g. PGD-AT \cite{madry2018towards}), and the other decomposing the robust risk into supervised and regularization terms and minimizing the corresponding regularized empirical risk (e.g. TRADES \cite{zhang2019theoretically} ). In algorithms based on PGD-AT, label information for all data is required, and thus it cannot be directly applied in a semi-supervised setting. TRADES can be applied in a semi-supervised setting since the regularization term does not require label information. However, it shows poor performance for semi-supervised learning. 

For achieving adversarial robustness in a semi-supervised setting, several SS-AT algorithms have been proposed \cite{uesato2019are, carmon2019unlabeled, zhai2019adversarially, zhang2021armoured, gowal2021selfsupervised}.
RST \cite {carmon2019unlabeled} generates pseudo labels for unlabeled data by predicting the class labels using a teacher model trained only with labeled data. Then, the algorithm minimizes the regularized empirical risk of TRADES \cite{zhang2019theoretically} using both the labeled data and the unlabeled data with their pseudo labels.
That is, it minimizes the following regularized empirical risk:
\begin{align}
    &\mathcal{R}_{\text{(RST)}}(\bm{\theta} ; \left\{(\bm{x}_i, y_i) \right\}_{i=1}^{n_l}, \left\{(\bm{x}_j, \widehat{y}_j) \right\}_{j=n_{l}+1}^{n_{n_l} + n_{ul}}, \lambda ) \nonumber \\
    & := \frac{1}{n_{l}+n_{ul}}\sum\limits_{k=1}^{n_l + n_{ul}} \Big\{ \ell_{\text{ce}}(f_{\theta}(\bm{x}_k), y_k \text{\;or\;} \widehat{y}_k) \nonumber \\
    & + \lambda \operatorname{D_{KL}}(\mathbf{p}_{\bm{\theta}}(\cdot|\bm{x}_k) || \mathbf{p}_{\bm{\theta}}(\cdot|\widehat{\bm{x}}^{\text{pgd}}_k)) \Big\} ,
\end{align}
where $\widehat{y}_j = F_{\bm{\theta}_{T}}(\bm{x}_j)$ and $\bm{\theta}_{T}$ is the parameter of the teacher model trained only with labeled data.

On the other hand, UAT \cite{uesato2019are} minimizes the following regularized empirical risk:
\begin{align}
    &\mathcal{R}_{\text{(UAT)}}(\bm{\theta} ; \left\{(\bm{x}_i, y_i) \right\}_{i=1}^{n_l}, \left\{(\bm{x}_j, \widehat{y}_j) \right\}_{j=n_{l}+1}^{n_{n_l} + n_{ul}}, \lambda ) \nonumber \\
    & := \frac{1}{n_{l}+n_{ul}}\sum\limits_{k=1}^{n_l + n_{ul}} \Big\{ \ell_{\text{ce}}(f_{\theta}(\widehat{\bm{x}}^{\text{pgd}}_k), y_k \text{\;or\;} \widehat{y}_k) \nonumber \\
    & + \lambda \operatorname{D_{KL}}(\mathbf{p}_{\bm{\tilde{\theta}}}(\cdot|\bm{x}_k) || \mathbf{p}_{\bm{\theta}}(\cdot|\widehat{\bm{x}}^{\text{pgd}}_k)) \Big\}.
\end{align}
If $\lambda=0$, it is called Unsupervised Adversarial Training with Fixed Targets (UAT-FT); otherwise, it is called Unsupervised Adversarial Training Plus Plus (UAT++).

However, as seen in Figure \ref{fig:numlabels}, RST and UAT++ show poor performances when the amount of labeled data is insufficient, which is partially the teacher model does not predict well.
To resolve this problem,
ARMOURED (Adversarially Robust MOdels using Unlabeled data by REgularizing Diversity) \cite{zhang2021armoured}
combines multi-view learning and diversity regularization. 
It employs a multi-view ensemble learning approach for selecting high quality pseudo labeled data.

\section{Semi-Supervised Robust Self-Training via Adaptively Weighted Regularization and Knowledge Distillation}

In this section, we derive two upper bounds of the robust risk and propose a SS-AT algorithm by modifying the upper bounds.

\subsection{Upper Bounds of the Population Robust Risk}

The population robust risk $\mathcal{R}_{\text{rob}}(\bm{\theta})$ is decomposed of the two terms - natural risk and boundary risk as follows \cite{zhang2019theoretically}:
\begin{equation*}
    \mathcal{R}_{\text{rob}}(\bm{\theta}) = \mathcal{R}_{\text{nat}}(\bm{\theta}) + \mathcal{R}_{\text{bdy}}(\bm{\theta}),
\end{equation*}
where $\mathcal{R}_{\text{nat}}(\bm{\theta}) = \mathbb{E}_{(\mathbf{X}, Y)}\mathbbm{1}\left\{ F_{\bm{\theta}}(\mathbf{X}) \neq Y \right\}$ and $\mathcal{R}_{\text{bdy}}(\bm{\theta}) = \mathbb{E}_{(\mathbf{X}, Y)}\mathbbm{1}\{\exists \mathbf{X}' \in \mathcal{B}_p(\mathbf{X}, \varepsilon) :F_{\bm{\theta}}(\mathbf{X})\neq F_{\bm{\theta}}(\mathbf{X}')   ,F_{\bm{\theta}}(\mathbf{X}) = Y\}$.
The objective of adversarial training is to find $\bm{\theta}$ minimizing the population robust risk.

The following theorems provide two upper bounds of the population robust risk whose proofs are deferred to Appendix \ref{appA}.

\begin{restatable}{theorem}{semiarow}
\label{thm3.1}
For a given score function $f_{\bm{\theta}},$ 
let
\begin{equation*}
    z(\bm{x}) \in \underset{\bm{x}' \in \mathcal{B}_{p}(\bm{x}, \varepsilon)}{\operatorname{argmax}} \mathbbm{1} \left\{ F_{\bm{\theta}}(\bm{x}) \neq F_{\bm{\theta}}(\bm{x}')\right\}.
\end{equation*}
Then, we have
\begin{align} 
    \mathcal{R}_{\text{rob}}&(\bm{\theta})  \leq \mathbb{E}_{(\mathbf{X},Y)} \mathbbm{1}\{ Y \neq F_{\bm{\theta}}(\mathbf{X}) \} \nonumber \\
     + &\mathbb{E}_{\mathbf{X}} \left\{ {\mathbbm{1}\{ F_{\bm{\theta}}(\mathbf{X}) \neq F_{\bm{\theta}}(z(\mathbf{X})) \} } \cdot p(Y \neq F_{\bm{\theta}}(z(\mathbf{X})) | \mathbf{X}) \right\}.
    \label{semi-arow:ub}
\end{align}
\end{restatable}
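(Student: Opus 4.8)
The plan is to isolate the boundary term, since the first summand in (\ref{semi-arow:ub}) is literally $\mathcal{R}_{\text{nat}}(\bm{\theta})$, and then bound $\mathcal{R}_{\text{bdy}}(\bm{\theta})$ on its own. First I would note that for a fixed $\bm{x}$ the inner maximum $\max_{\bm{x}'\in\mathcal{B}_p(\bm{x},\varepsilon)}\mathbbm{1}\{F_{\bm{\theta}}(\bm{x}')\neq y\}$ equals the indicator $\mathbbm{1}\{\exists\,\bm{x}'\in\mathcal{B}_p(\bm{x},\varepsilon):F_{\bm{\theta}}(\bm{x}')\neq y\}$, and I would split that event into the two disjoint pieces $\{F_{\bm{\theta}}(\bm{x})\neq y\}$ and $\{F_{\bm{\theta}}(\bm{x})=y\}\cap\{\exists\,\bm{x}':F_{\bm{\theta}}(\bm{x}')\neq F_{\bm{\theta}}(\bm{x})\}$, where taking $\bm{x}'=\bm{x}$ covers the first. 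Taking expectations reproduces the decomposition $\mathcal{R}_{\text{rob}}(\bm{\theta})=\mathcal{R}_{\text{nat}}(\bm{\theta})+\mathcal{R}_{\text{bdy}}(\bm{\theta})$ of \cite{zhang2019theoretically}, and---this is the hinge---because $z(\bm{x})$ is an $\operatorname{argmax}$ of $\mathbbm{1}\{F_{\bm{\theta}}(\bm{x})\neq F_{\bm{\theta}}(\cdot)\}$ over the ball, the event $\{\exists\,\bm{x}':F_{\bm{\theta}}(\bm{x}')\neq F_{\bm{\theta}}(\bm{x})\}$ coincides with $\{F_{\bm{\theta}}(\bm{x})\neq F_{\bm{\theta}}(z(\bm{x}))\}$. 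Hence $\mathcal{R}_{\text{bdy}}(\bm{\theta})=\mathbb{E}_{(\mathbf{X},Y)}\!\left[\mathbbm{1}\{F_{\bm{\theta}}(\mathbf{X})=Y\}\,\mathbbm{1}\{F_{\bm{\theta}}(\mathbf{X})\neq F_{\bm{\theta}}(z(\mathbf{X}))\}\right]$.

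Next I would invoke the elementary pointwise inequality $\mathbbm{1}\{F_{\bm{\theta}}(\mathbf{X})=Y\}\,\mathbbm{1}\{F_{\bm{\theta}}(\mathbf{X})\neq F_{\bm{\theta}}(z(\mathbf{X}))\}\le\mathbbm{1}\{Y\neq F_{\bm{\theta}}(z(\mathbf{X}))\}\,\mathbbm{1}\{F_{\bm{\theta}}(\mathbf{X})\neq F_{\bm{\theta}}(z(\mathbf{X}))\}$, which holds because on the event that the left side is $1$ we have $Y=F_{\bm{\theta}}(\mathbf{X})\neq F_{\bm{\theta}}(z(\mathbf{X}))$, so $Y\neq F_{\bm{\theta}}(z(\mathbf{X}))$. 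Taking expectations and then conditioning on $\mathbf{X}$, the factor $\mathbbm{1}\{F_{\bm{\theta}}(\mathbf{X})\neq F_{\bm{\theta}}(z(\mathbf{X}))\}$ is $\sigma(\mathbf{X})$-measurable and pulls out of the conditional expectation, leaving $\mathbb{E}[\mathbbm{1}\{Y\neq F_{\bm{\theta}}(z(\mathbf{X}))\}\mid\mathbf{X}]=p(Y\neq F_{\bm{\theta}}(z(\mathbf{X}))\mid\mathbf{X})$. The outer expectation over $\mathbf{X}$ then yields the second term of (\ref{semi-arow:ub}), and adding back $\mathcal{R}_{\text{nat}}(\bm{\theta})=\mathbb{E}_{(\mathbf{X},Y)}\mathbbm{1}\{Y\neq F_{\bm{\theta}}(\mathbf{X})\}$ completes the bound.

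I do not expect a genuine obstacle: the argument is a careful repackaging of the TRADES-style natural/boundary split together with the observation that a clean correct prediction that is flipped somewhere in the $\varepsilon$-ball is necessarily flipped at $z(\mathbf{X})$. The only points warranting explicit care are (i) justifying that $z(\cdot)$ may be chosen measurably, so that the conditioning step is legitimate, and (ii) the claim that $z(\bm{x})$ realizes the existential quantifier in $\mathcal{R}_{\text{bdy}}$ whenever that event is nonempty, which is exactly what turns the abstract boundary event into the concrete surrogate point $z(\mathbf{X})$ appearing in the stated bound.
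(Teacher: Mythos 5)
Your proof is correct and follows essentially the same route as the paper: the natural/boundary decomposition of $\mathcal{R}_{\text{rob}}$, the observation that the boundary event is witnessed by $z(\mathbf{X})$ so that a correct clean prediction flipped inside the ball forces $Y \neq F_{\bm{\theta}}(z(\mathbf{X}))$ (exactly the content of the paper's Lemma \ref{lemma1}, quoted from prior work), and then conditioning on $\mathbf{X}$ to pull out $p(Y \neq F_{\bm{\theta}}(z(\mathbf{X}))\mid\mathbf{X})$. The only difference is cosmetic: you prove the pointwise indicator inequality directly instead of citing the lemma, which if anything makes the step from the boundary event to the lemma's hypothesis more explicit than the paper does.
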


\begin{restatable}{theorem}{semicow}
\label{thm3.2}
For a given score function $f_{\bm{\theta}},$ 
let
\begin{equation*}
    z(\bm{x}) \in \underset{\bm{x}' \in \mathcal{B}_{p}(\bm{x}, \varepsilon)}{\operatorname{argmax}} \mathbbm{1} \{ F_{\bm{\theta}}(\bm{x}) \neq F_{\bm{\theta}}(\bm{x}')\}.
\end{equation*}
Then, we have
\begin{align}
    \mathcal{R}_{\text{rob}}&(\bm{\theta})  \leq \mathbb{E}_{(\mathbf{X},Y)} \mathbbm{1}\{ Y \neq F_{\bm{\theta}}(\mathbf{X}) \} \nonumber \\
    + & \mathbb{E}_{\mathbf{X}} \left\{ {\mathbbm{1} \{F_{\bm{\theta}}(\mathbf{X}) \neq F_{\bm{\theta}}(z(\mathbf{X}))\}} \cdot p(Y = F_{\bm{\theta}}(\mathbf{X}) | \mathbf{X}) \right\}.
    \label{semi-cow:ub}
\end{align}
\end{restatable}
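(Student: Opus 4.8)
The plan is to show that the claimed inequality is actually an identity, obtained from the TRADES-style decomposition $\mathcal{R}_{\text{rob}}(\bm{\theta}) = \mathcal{R}_{\text{nat}}(\bm{\theta}) + \mathcal{R}_{\text{bdy}}(\bm{\theta})$ recalled just above the statement. Since $\mathcal{R}_{\text{nat}}(\bm{\theta}) = \mathbb{E}_{(\mathbf{X},Y)}\mathbbm{1}\{F_{\bm{\theta}}(\mathbf{X}) \neq Y\}$ is literally the first summand on the right-hand side of \eqref{semi-cow:ub}, the whole statement reduces to evaluating the boundary risk $\mathcal{R}_{\text{bdy}}(\bm{\theta})$, and then relaxing the resulting equality to ``$\leq$'' is free.

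First I would record the elementary fact about $z(\cdot)$: because $z(\bm{x})$ maximizes $\mathbbm{1}\{F_{\bm{\theta}}(\bm{x}) \neq F_{\bm{\theta}}(\bm{x}')\}$ over $\bm{x}' \in \mathcal{B}_p(\bm{x},\varepsilon)$, the maximal value equals $1$ precisely when some point of the ball is predicted differently from $\bm{x}$, and in that case $z(\bm{x})$ realizes it. Hence the set identity
\[
\{\bm{x} : \exists\, \bm{x}' \in \mathcal{B}_p(\bm{x},\varepsilon),\ F_{\bm{\theta}}(\bm{x}') \neq F_{\bm{\theta}}(\bm{x})\} = \{\bm{x} : F_{\bm{\theta}}(\bm{x}) \neq F_{\bm{\theta}}(z(\bm{x}))\},
\]
so that inside $\mathcal{R}_{\text{bdy}}$ the existential ``$\exists\, \bm{x}'$'' may be replaced by the single evaluation at $z(\mathbf{X})$. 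The boundary indicator therefore factors as the product $\mathbbm{1}\{F_{\bm{\theta}}(\mathbf{X}) \neq F_{\bm{\theta}}(z(\mathbf{X}))\}\cdot\mathbbm{1}\{F_{\bm{\theta}}(\mathbf{X}) = Y\}$.

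Next I would condition on $\mathbf{X}$ and apply the tower rule: the factor $\mathbbm{1}\{F_{\bm{\theta}}(\mathbf{X}) \neq F_{\bm{\theta}}(z(\mathbf{X}))\}$ is $\sigma(\mathbf{X})$-measurable, so it passes outside the inner conditional expectation, while $\mathbb{E}[\mathbbm{1}\{F_{\bm{\theta}}(\mathbf{X}) = Y\} \mid \mathbf{X}] = p(Y = F_{\bm{\theta}}(\mathbf{X}) \mid \mathbf{X})$. This yields $\mathcal{R}_{\text{bdy}}(\bm{\theta}) = \mathbb{E}_{\mathbf{X}}\big[\mathbbm{1}\{F_{\bm{\theta}}(\mathbf{X}) \neq F_{\bm{\theta}}(z(\mathbf{X}))\}\, p(Y = F_{\bm{\theta}}(\mathbf{X}) \mid \mathbf{X})\big]$, and adding back $\mathcal{R}_{\text{nat}}(\bm{\theta})$ gives exactly the right-hand side of \eqref{semi-cow:ub}.

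There is no deep analytic obstacle here; the only point that needs care is the existence and measurability of the selector $z(\cdot)$, which I would handle by either assuming $F_{\bm{\theta}}$ is continuous so the max over the compact ball is attained, or invoking a measurable-selection argument (noting that any measurable maximizer works, since only $F_{\bm{\theta}}(z(\mathbf{X}))$ enters). I would also add a one-line remark contrasting this with Theorem~\ref{thm3.1}: there one upper-bounds $\mathbbm{1}\{F_{\bm{\theta}}(\mathbf{X}) = Y\}$ by $\mathbbm{1}\{Y \neq F_{\bm{\theta}}(z(\mathbf{X}))\}$, valid on the event $F_{\bm{\theta}}(\mathbf{X}) \neq F_{\bm{\theta}}(z(\mathbf{X}))$, which loosens the bound but trades the label-dependent weight $p(Y = F_{\bm{\theta}}(\mathbf{X})\mid\mathbf{X})$ for one involving $F_{\bm{\theta}}(z(\mathbf{X}))$, and hence motivates a different regularizer; Theorem~\ref{thm3.2} instead keeps the bound tight.
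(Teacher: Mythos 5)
Your argument matches the paper's own proof: both decompose $\mathcal{R}_{\text{rob}}$ into $\mathcal{R}_{\text{nat}}+\mathcal{R}_{\text{bdy}}$, replace the existential over the ball by the single evaluation at $z(\mathbf{X})$, factor the boundary indicator, and condition on $\mathbf{X}$ to turn $\mathbbm{1}\{Y=F_{\bm{\theta}}(\mathbf{X})\}$ into $p(Y=F_{\bm{\theta}}(\mathbf{X})\mid\mathbf{X})$, with the bound on the boundary term holding as an equality. Your added remarks on measurable selection and the contrast with Theorem~\ref{thm3.1} are correct but go beyond what the paper records.
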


The key point of Theorems \ref{thm3.1} and \ref{thm3.2} is that their second terms on the right-hand side do not depend on label information. 
Thus, the second terms can be used as regularization term for unlabeled data.
In contrast, the regularization terms used in
supervised adversarial training algorithms
such as MART \cite{wang2020improving} and ARoW \cite{dongyoon2023improving} require label information.





\paragraph{Comparison to TRADES \cite{zhang2019theoretically}}
For binary classification problems such that $\mathcal{Y}=\{-1, 1\}$, the following upper bounds can be derived using Theorems \ref{thm3.1} and \ref{thm3.2}: 
\begin{align}
    \mathcal{R}_{\text{rob}}&(\bm{\theta}) 
     \leq \mathbb{E}_{(\mathbf{X},Y)} \phi(Y f_{\bm{\theta}}(\mathbf{X})) \nonumber \\
    + & \mathbb{E}_{\mathbf{X}} \{ {\phi(f_{\bm{\theta}}(\mathbf{X}) f_{\bm{\theta}}(z(\mathbf{X})) / \lambda )} \cdot p(Y \neq F_{\bm{\theta}}(z(\mathbf{X})) | \mathbf{X})\} \label{semi-arow:binary}, \\
    \mathcal{R}_{\text{rob}}&(\bm{\theta}) 
    \leq \mathbb{E}_{(\mathbf{X},Y)} \phi(Y f_{\bm{\theta}}(\mathbf{X})) \nonumber \\
    + & \mathbb{E}_{\mathbf{X}}\{{\phi(f_{\bm{\theta}}(\mathbf{X}) f_{\bm{\theta}}(z(\mathbf{X})) / \lambda)} \cdot
    p(Y = F_{\bm{\theta}}(\mathbf{X}) | \mathbf{X})\} \label{semi-cow:binary},
\end{align}
where $\phi$ is the binary cross entropy loss and $\lambda > 0$ is a regularization parameter. 
The proofs are provided in Appendix \ref{appA}.
In contrast, \cite{zhang2019theoretically} shows that
\begin{equation} 
    \mathcal{R}_{\text{rob}}(\bm{\theta}) \leq \mathbb{E}_{(\mathbf{X},Y)} \phi(Y f_{\bm{\theta}}(\mathbf{X})) +
    \mathbb{E}_{\mathbf{X}}{\phi(f_{\bm{\theta}}(\mathbf{X}) f_{\bm{\theta}}(z(\mathbf{X})) / \lambda) }. \label{trades:binary}
\end{equation}
Note that the upper bounds (\ref{semi-arow:binary}) and (\ref{semi-cow:binary}) are tighter than TRADES \cite{zhang2019theoretically}.



\subsection{Algorithm}


By  modifying the upper bounds in Theorems \ref{thm3.1} and \ref{thm3.2},
we propose the corresponding SS-AT algorithm.
The modifications are as follows:
\begin{itemize}
    \item the adversarial example $z(\bm{x})$ is replaced by $\widehat{\bm{x}}^{\text{pgd}}$ obtained by the PGD algorithm; 
    \item the term $\mathbbm{1}(Y \neq F_{\bm{\theta}}(\mathbf{X}))$
    is replaced by 
    the smooth cross-entropy $\ell^{\text{LS}}(f_{\bm{\theta}}(\bm{x}), y) $, where
    $\ell^{\text{LS}}(f_{\bm{\theta}}(\bm{x}), y) = - {\bm{y}_{\alpha}^{\text{LS}}}^{\top} \log \mathbf{p}_{\theta}(\cdot|\bm{x})$, $\bm{y}_{\alpha}^{\text{LS}} = (1-\alpha)\mathbf{u}_y + \frac{\alpha}{C}\mathbf{1}_C$, $\mathbf{u}_y \in \mathbb{R}^{C}$ is the one-hot vector whose the $y$-th entry is 1 and $\mathbf{1}_C \in \mathbb{R}^{C}$ is the vector whose entries are all 1;
    \item the term $\mathbbm{1}( F_{\bm{\theta}}(\mathbf{X}) \neq F_{\bm{\theta}}(z(\mathbf{X})))$ is replaced by $ \lambda \cdot \operatorname{KL}(\mathbf{p}_{\bm{\theta}}(\cdot|\mathbf{x}) || \mathbf{p}_{\bm{\theta}}(\cdot|\widehat{\mathbf{x}}^{\text{pgd}}))$ for a regularization parameter $\lambda>0;$
    \item the terms $p(Y \neq F_{\bm{\theta}}(\widehat{\mathbf{X}}^{\text{pgd}})|\mathbf{X})$ and $p(Y=F_{\bm{\theta}}(\mathbf{X})|\mathbf{X})$  are replaced by $ 1 - \sum\limits_{c=1}^C p_{\bm{\theta}_T}(Y = c | \widehat{\mathbf{x}}^{\text{pgd}}) p_{\bm{\theta}}( c | \widehat{\mathbf{x}}^{\text{pgd}} )
    $ and $\sum\limits_{c=1}^C p_{\bm{\theta}_T}(Y = c | \mathbf{x}) p_{\bm{\theta}}( c | \mathbf{x} )
    $, respectively.
\end{itemize}



\begin{table*}
    \caption{\textbf{Comparison SRST-AWR, RST and UAT++}. $\bm{\theta}_{\text{sup}}$ and $\bm{\theta}_{\text{semi}}$ are parameters of the models using supervised and semi-supervised learning algorithms, respectively. $\bm{x}_i$ and $\bm{x}_j$ are labeled sample and unlabeled sample, respetively. $x_k$ can be labeled or unlabeled sample having pseudo label $\widehat{y}_k$.}
    \centering
    \setlength{\tabcolsep}{4pt}
    \begin{tabular}{c|ccc}
    \hline
    \textbf{Method}  & Pseudo-labeling & Supervised Term  & Regularization Term  \\
    \hline
    UAT++                    & $\widehat{y}_j = F_{{\bm{\theta}_{\text{sup}}}}(\bm{x}_j)$ &  $\ell_{\text{ce}}(f_{\theta}(\widehat{\bm{x}}^{\text{pgd}}_k) , y_k \text{\;or\;} \widehat{y}_k)$ & $\operatorname{D_{KL}}(\mathbf{p}_{\bm{\tilde{\theta}}}(\cdot|\bm{x}_k) || \mathbf{p}_{\bm{\theta}}(\cdot|\widehat{\bm{x}}^{\text{pgd}}_k))$
    
    \\
    RST                      & $\widehat{y}_j = F_{{\bm{\theta}_{\text{sup}}}}(\bm{x}_j)$ & $\ell_{\text{ce}}(f_{\theta}(\bm{x}_k), y_k \text{\;or\;} \widehat{y}_k)$
    & $\operatorname{D_{KL}}(\mathbf{p}_{\bm{\theta}}(\cdot|\bm{x}_k) || \mathbf{p}_{\bm{\theta}}(\cdot|\widehat{\bm{x}}^{\text{pgd}}_k))$ 
    \\
    ARMOURED & Multi-view learning & $\ell_{\text{ce}}(f_{\theta}(\widehat{\bm{x}}^{\text{pgd}}_k) , y_k \text{\;or\;} \widehat{y}_k)$ & $\mathcal{L}_{\text{DPP}}(\bm{x}_k, y_k \text{\;or\;} \widehat{y}_k)$, $\mathcal{L}_{\text{NEM}}(\bm{x}_k, y_k \text{\;or\;} \widehat{y}_k)$
    \\
    SRST-AWR                & Soft pseudo-labeling & $\ell_{\text{ce}}(f_{\theta}(\bm{x}_i), y_i)$ & $\operatorname{D_{KL}}(\mathbf{p}_{\bm{\theta}}(\cdot|\bm{x}_j) || \mathbf{p}_{\bm{\theta}}(\cdot|\widehat{\bm{x}}^{\text{pgd}}_j)) \times w_{\bm{\theta}}(\bm{x}_j) $  \\
    \hline
  \end{tabular}
  \label{table:compare-surrogate}
  \vskip -0.1in
\end{table*}
 
The upper bounds in Theorems \ref{thm3.1} and \ref{thm3.2} have the conditional probabilities $p(F_{\bm{\theta}}(\widehat{\bm{x}}^{\text{pgd}})|\bm{x})$ and $p(F_{\bm{\theta}}(\bm{x})|\bm{x})$. 
We estimate these conditional probabilities by smooth proxies as following:
\begin{align*}
    p(Y \neq F_{\bm{\theta}}(\widehat{\bm{x}}^{\text{pgd}})| \bm{x}) &= 1 - p(Y = F_{\bm{\theta}}(\widehat{\bm{x}}^{\text{pgd}})| \bm{x}) \\
    &= 1-  \sum\limits_{c=1}^C p(Y=c|\bm{x})\mathbbm{1}(c = F_{\bm{\theta}}(\widehat{\bm{x}}^{\text{pgd}})) \\
    & \approx 1 - \sum\limits_{c=1}^C p_{\bm{\theta}_T}(Y=c|\bm{x})p_{\bm{\theta}}(c | \widehat{\bm{x}}^{\text{pgd}}), \\
    p(Y=F_{\bm{\theta}}(\bm{x})|\bm{x}) &= \sum\limits_{c=1}^C p(Y=c|\bm{x})\mathbbm{1}(c=F_{\bm{\theta}}(\bm{x})) \\
    & \approx \sum\limits_{c=1}^C p_{\bm{\theta}_T}(Y=c|\bm{x})p_{\bm{\theta}}(c | \bm{x}).
\end{align*}
where $\bm{\theta}_T$ is a parameter of the teacher model.

We use the label smooth cross entropy instead of the standard cross entropy during training phase because
the use of standard cross-entropy induces
overconfident predictions \cite{guo2017on}.
To accurately estimate the conditional predictive probability $\mathbf{p}_{\bm{\theta}}(\cdot|\bm{x})$ in modified version of upper bounds in Theorems \ref{thm3.1} and \ref{thm3.2}, we use label smoothing cross-entropy as a surrogate for $\mathbbm{1}(Y \neq F_{\bm{\theta}}(\mathbf{X}))$ \cite{muller2019when}.

Furthermore, we introduce a knowledge distillation term using the teacher model trained by a semi-supervised learning algorithm to facilitate soft pseudo-labeling instead of hard labeling which is used exiting works \cite{carmon2019unlabeled, uesato2019are}.

\begin{algorithm}[H]
    \small
	\textbf{Inputs} : network $f_{\bm{\theta}}$, training dataset $\mathcal{D}^{n_l}_{l}=\{(\bm{x}_i, y_i) \in \mathbb{R}^{d+1}$ $: i=1, \cdots, n_{l} \}$, $\mathcal{D}^{n_{ul}}_{ul}=\{\bm{x}_j \in \mathbb{R}^{d+1}: j=1, \cdots, n_{ul} \}$, learning rate $\eta$, hyperparameters ($\alpha$, $\lambda$, $\beta$, $\gamma$, $\tau$) of (\ref{our:emp-risk}), number of epochs $T$, number of batch $B$, batch size $K$ \\
	\textbf{Output} : adversarially robust model $f_{\bm{\theta}}$
	\begin{algorithmic}[1]
        \STATE Train a teacher model $f_{\bm{\theta}_T}$ using a semi-supervised learning algorithm on $\mathcal{D}_l \cup \mathcal{D}_{ul}$
        \FOR{$ t = 1 , \cdots, T$}
            \FOR{$ b = 1 , \cdots, B$}
                \FOR{$ k = 1 , \cdots, K$}
                \STATE Generate $\widehat{\bm{x}}^{\text{pgd}}_{b, k}$ using PGD$^{10}$ in (\ref{pgd})
                ; $\bm{x}_{b, k}  \in \mathbb{R}^{d}$
                \ENDFOR
                \STATE \begin{align*}
                            \bm{\theta} \leftarrow 
                            \bm{\theta}-\eta\frac{1}{K} \nabla_{\bm{\theta}} & \mathcal{R}_{(\text{SRST-AWR})}({\bm{\theta}}; \mathcal{D}^{K}_{l} \cup \mathcal{D}^{K}_{ul}) \text{\;in\;} (\ref{our:emp-risk})
                        \end{align*}
            \ENDFOR
        \ENDFOR
    \STATE \textbf{Return} $f_{\bm{\theta}}$
	\end{algorithmic}
	\caption{Semi-Supervised Robust-Self-Training with Adaptively Weighted Regularziation Algorithm (SRST-AWR)}
    \label{alg:SRST-AWR}
\end{algorithm}

In summary,
we combine the adpaptively weighted surrogate loss and knowledge distillation with a semi-supervised teacher.
We call the algorithm Semi-Robust-Self-Training with Adaptively Weighted Regularization (SRST-AWR) which minimizes the following regularized empirical risk:
\begin{align}
    \label{our:emp-risk}
    &\mathcal{R}_{\text{(SRST-AWR)}}(\bm{\theta} ; \left\{(\bm{x}_i, y_i) \right\}_{i=1}^{n_l}, \left\{\bm{x}_j \right\}_{j=1}^{n_{ul}}, \alpha, \lambda, \beta, \gamma, \tau, \bm{\theta}_{T} ) 
    \nonumber \\
    &:= \frac{1}{n_{l}}\sum\limits_{i=1}^{n_{l}}  \ell_{\alpha}^{\text{LS}}(f_{\bm{\theta}}(\bm{x}_i), y_i) \nonumber \\
    & + \gamma \cdot \frac{1}{n_{ul}} \sum\limits_{j=1}^{n_{ul}} \operatorname{D_{KL}}(\mathbf{p}^{\tau}_{\bm{\theta}_{T}}(\cdot|\bm{x}_j) || \mathbf{p}^{\tau}_{\bm{\theta}}(\cdot|\bm{x}_j))  \nonumber \\
    & + \lambda \cdot \frac{1}{n_{ul}} \sum\limits_{j=1}^{n_{ul}}   \cdot \operatorname{D_{KL}}(\mathbf{p}_{\bm{\theta}}(\cdot|\bm{x}_j) || \mathbf{p}_{\bm{\theta}}(\cdot|\widehat{\bm{x}}^{\text{pgd}}_j)) \cdot w_{\bm{\theta}}(\bm{x}_j ; \beta\ \bm{\theta}_T),
\end{align}
where
\begin{align}
\small
    w_{\bm{\theta}}(\bm{x}_j ; \beta, \bm{\theta}_T) & = \beta \cdot \sum\limits_{c=1}^C p_{\bm{\theta}_T}(Y=c|\bm{x}_j)p_{\bm{\theta}}(c | \bm{x}_j) \nonumber \\
     + (1-  \beta ) & \cdot \left(1 - \sum\limits_{c=1}^C p_{\bm{\theta}_T}(Y=c|\bm{x}_j)p_{\bm{\theta}}(c | \widehat{\bm{x}}^{\text{pgd}}_j) \right), 
\end{align}
where
$\beta \in [0, 1]$ 
and $\tau$ is a temperature for knowledge distillation.
The proposed semi-supervised adversarial training algorithm is summarized in Algorithm \ref{alg:SRST-AWR}. 

\paragraph{Comparison of SRST-AWR to RST \cite{carmon2019unlabeled}}
The proposed SRST-AWR algorithm differs from RST \cite{carmon2019unlabeled} in three main ways.
Firstly, SRST-AWR employs a semi-supervised teacher that is trained with both labeled and unlabeled data, whereas RST utilizes a supervised teacher that is trained only on labeled data.
Second, RST uses hard targets (one-hot labels) as pseudo-labels, while SRST-AWR employs soft targets (predictive probabilities) via knowledge distillation.
The parameter $\tau$ in (\ref{our:emp-risk}) regulates the smoothness of the pseudo-labels, where smaller values result in harder targets (using one-hot labels) and larger values result in softer targets (using predictive probabilities via knowledge distillation).
Finally, SRST-AWR uses a new regularized empirical risk motivated by two upper bounds (\ref{semi-arow:ub}) and (\ref{semi-cow:ub}) for the robust risk.
Table \ref{table:compare-surrogate} provides the 
comparison of SRST-AWR, RST, UAT++ and ARMOURED.

\paragraph{Knowledge Distillation in Semi-Supervised Learning}

We have observed that the performance of a student model 
does not surpass the teacher model in the vanilla semi-supervised setting (i.e., without adversarial robust training). See Appendix \ref{knowledge-sup-semi} for empirical evidences. However, we have found that applying knowledge distillation with a semi-supervised teacher can improve the adversarial robustness and generalization performance of the student model simultaneously in SS-AT.

\paragraph{Interpretation of $w_{\bm{\theta}}(\bm{x} ; \beta, \bm{\theta}_T)$} We set $\beta$ to 1/2. Then, 
\begin{align*}
    & 2 w_{\bm{\theta}}(\bm{x} ; \beta, \bm{\theta}_T) = \sum\limits_{c=1}^C p_{\bm{\theta}_T}(Y=c|\bm{x})p_{\bm{\theta}}(c | \bm{x}) \\
    & + \left(1 - \sum\limits_{c=1}^C p_{\bm{\theta}_T}(Y=c|\bm{x}_j)p_{\bm{\theta}}(c | \widehat{\bm{x}}^{\text{pgd}}) \right) \\
    & = \left\langle p_{\bm{\theta}_T}(\cdot | \bm{x}),  p_{\bm{\theta}}(\cdot | \bm{x}) \right\rangle + (1 -  \left\langle p_{\bm{\theta}_T}(\cdot | \bm{x}),  p_{\bm{\theta}}(\cdot | \widehat{\bm{x}}^{\text{pgd}})  \right\rangle \\
    & \approx \underbrace{\operatorname{corr}(p_{\bm{\theta}_T}(\cdot | \bm{x}),  p_{\bm{\theta}}(\cdot | \bm{x}))}_{:= \text{(a)}} + (1 - \underbrace{\operatorname{corr}(p_{\bm{\theta}_T}(\cdot | \bm{x}),  p_{\bm{\theta}}(\cdot | \widehat{\bm{x}}^{\text{pgd}}))}_{:= \text{(b)}} )
\end{align*}

Our $w_{\bm{\theta}}(\bm{x} ; \beta, \bm{\theta}_T)$ in (\ref{our:emp-risk}) is designed to impose more weights
when the current prediction on a clean sample is highly correlated to the prediction of the teacher model (i.e. (a) part) and/or 
the current prediction on adversarial example is lowly correlated to the prediction of the teacher (i.e. (b) part ).

\begin{figure*}
\begin{center}
\begin{minipage}[c]{0.45\linewidth}
\includegraphics[width=\linewidth]{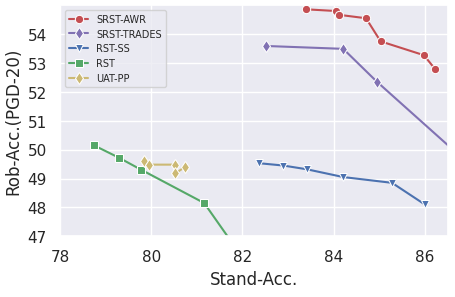}
\end{minipage}
\hfill
\begin{minipage}[c]{0.45\linewidth}
\includegraphics[width=\linewidth]{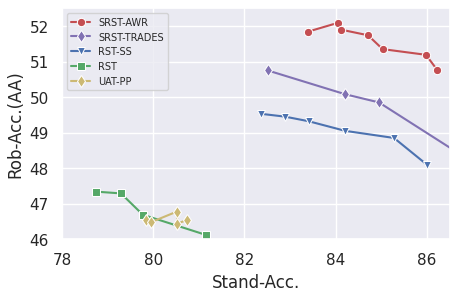}
\end{minipage}
\caption{\textbf{Comparison of SRST-AWR, SRST-TRADES, Semi-RST, RST and UAT++ for varying $\lambda$}.
The $x$-axis and $y$-axis are the standard and robust accuracies, respectively. 
The adversarial attacks for robust accuracies
are PGD$^{20}$ in the left panel and AutoAttack in right panel.
}
\label{fig:compare-tradeoff}
\end{center}
\end{figure*}

\section{Experiments}
\label{sec4}

In this section, we report the performance of our algorithm and compare it with other competitors (Section \ref{performance-evaluation}), investigate how each component of our algorithm affects the performance improvement (Section \ref{ablation-study}), and evaluate the performance of our algorithm in comparison to supervised adversarial training algorithms (Section \ref{supervised}).
The code is available at \href{https://github.com/dyoony/SRST_AWR}{https://github.com/dyoony/SRST\_AWR.}

\subsection{Experimental Setup}

\paragraph{Training Setup}
The datasets are normalized into [0, 1].
We consider the three architectures - WideResNet-28-5 (WRN-28-5) \cite{zagoruyko2016wide} for CIFAR-10  \cite{krizhevsky2009cifar} and STL-10 \cite{coates2011stl}, WideResNet-28-2 (WRN-28-2) for SVHN \cite{netzer2011svhn} and WideResNet-28-8 (WRN-28-8) for CIFAR-100 \cite{krizhevsky2009cifar}, respectively. The architecture of the teacher network is same as that of the student and the training algorithm for teacher networks is FixMatch \cite{sohn2020fixmatch}.
Details for training the teacher model are summarized in Appendix \ref{appB:teacher-models}.

We retain 4,000 samples for CIFAR-10  and CIFAR-100 and 1,000 samples for SVHN and STL-10 as labeled data
from official train data, and use the remaining data for unlabeled data.
For training prediction models,
the SGD with momentum $0.9$, weight decay  $5 \times 10^{-4}$, the initial learning rate 0.1 on CIFAR-10 and 0.05 on CIFAR-100, SVHN and STL-10 are used.
The total epochs is 200 and the learning rate is multiplied by 0.1 after each 50 and 150 epoch.
The batch size of labeled data and unlabeled data are 64 and 128, respectively.
For CIFAR-10, CIFAR-100 and STL-10, the random crop and random horizontal flip with probability 0.5 are applied for data augmentation.
Stochastic weighting average (SWA) \cite{izmailov2018averaging} is applied after 50-epochs. We select the models with having a maximum robust accuracy against PGD$^{10}$ on test set.
For training SRST-AWR,
we set $(\alpha, \gamma, \beta, \tau)$ to (0.2, 4, 0.5, 1.2) and select $\lambda$ maximizing the robust accuracy.

In maximization step, PGD$^{10}$  with random start, $p=\infty$, $\varepsilon = 8/255$ and $\nu = 2/255$ is used.
The final model is set to be the best model against PGD$^{10}$ on the validation set among those obtained until 200 epochs.
Experimental details are summarized in Appendix \ref{appB}.

\paragraph{Evaluation Setup}
For evaluating adversarial robustness, we set the maximum perturbation $\varepsilon$ to $8/255$.
We implement PGD$^{20}$ and Auto-Attack (AA) \cite{croce2020reliable}. Auto-Attack is ensemble of four attack - APGD, APGD-DLR, FAB \cite{croce2020minimally} and Square Attack \cite{andriushchenko2020square}. Among them, AGPD-DLR and Square Attack are effective to check the gradient masking \cite{ilyas2018blackbox}.


\begin{table*}
    \caption{\textbf{Comparison to RST and UAT++}. We conduct the experiment three times with different seeds and present the averages of the accuracies with the standard errors in the parenthesis.}
    \centering
    \begin{tabular}{c|ccc|ccc}
    \hline
    \multirow{2}{*}{\textbf{Method}} &
        \multicolumn{3}{c|}{CIFAR-10 (WRN-28-5)} &
        \multicolumn{3}{c}{CIFAR-100 (WRN-28-8)} \\
    \cline{2-7}    
     & \textbf{Stand}  & $\textbf{PGD}^{20}$ & \textbf{AA} & \textbf{Stand}  & $\textbf{PGD}^{20}$ & \textbf{AA} \\
     \hline
    RST                     & 79.77(0.06) & 49.31(0.09) & 46.69(0.02) & 47.36(0.06) & 	17.32(0.06) & 14.65(0.19) 
 \\
    UAT++                   & 79.84(0.23) & 49.61(0.07) & 46.53(0.11) & 46.99(0.18) 	& 22.13(0.04)  & 19.69(0.07)
 \\
   \hline
    SRST-AWR                & \textbf{84.56}(0.06) & \textbf{54.41}(0.15) & \textbf{51.72}(0.06) & \textbf{54.25}(0.04) & \textbf{30.90}(0.07) & \textbf{25.58}(0.05) \\
    \hline
    \hline
    \multirow{2}{*}{\textbf{Method}} &
        \multicolumn{3}{c|}{SVHN (WRN-28-2)} &
        \multicolumn{3}{c}{STL-10 (WRN-28-5)} \\
    \cline{2-7}    
     & \textbf{Stand}  & $\textbf{PGD}^{20}$ & \textbf{AA} & \textbf{Stand}  & $\textbf{PGD}^{20}$ & \textbf{AA} \\
     \hline
    RST                     & 86.47(0.51) & 52.62(0.44) & 42.38(0.48) & 60.81(0.32) & 35.86(0.37) & 34.00(0.22) \\
    UAT++                   & 91.07(0.08) & 52.54(0.16) & 46.81(0.13) & 58.56(0.28) & 43.00(0.21) & 40.08(0.24) \\
   \hline
    SRST-AWR               & \textbf{91.86}(0.19) & \textbf{57.02}(0.73) & \textbf{50.84}(0.83) & \textbf{89.61}(0.21) & \textbf{73.93}(0.28) & \textbf{69.99}(0.27)   \\
    \hline
  \end{tabular}
  \label{table:compare-performance}
\end{table*}

\begin{table*}
    \caption{\textbf{Comparison to ARMOURED}. The performance measures of ARMOURED are cited from original paper i.e. ARMOUREDs are not reimplemented. Maximum perturbation size $\varepsilon$s are set to be 8/255 and 4/255 on CIFAR-10 and SVHN, respectively. This settings are identical to ARMOURED \cite{zhang2021armoured}. We conduct the experiment three times with different seeds and present the averages of the accuracies with the standard errors in the parenthesis.}
    \centering
    \begin{adjustbox}{center,max width=0.95\linewidth}
    \begin{tabular}{c|ccc|ccc}
    \hline
    \multirow{2}{*}{Method} &
        \multicolumn{3}{c|}{CIFAR-10 (WRN-28-2)} &
        \multicolumn{3}{c}{SVHN (WRN-28-2)} \\
    \cline{2-7}    
     & \textbf{Stand} & $\textbf{PGD}^{7}$ & \textbf{AA} & \textbf{Stand}  & $\textbf{PGD}^{7}$ & \textbf{AA} \\
    \hline
    ARMOURED-F+AT                  & 76.76(1.60) & \textbf{55.12}(4.90) & 35.24(4.56) & 92.44(0.64) & 62.10(8.39) & 23.35(3.22)   \\
    SRST-AWR        & \textbf{81.78}(0.03) & 52.43(0.10) & \textbf{47.41}(0.11) & \textbf{95.30}(0.05) & \textbf{81.19}(0.03)  & \textbf{78.21}(0.03) \\
    \hline
    \end{tabular}
    \end{adjustbox}
    \label{table:compare-armoured}
\end{table*}

\subsection{Performance Evaluation}
\label{performance-evaluation}

\paragraph{Comparison of SRST-AWR to RST and UAT++}

We compare SRST-AWR to RST and UAT++.
Since \cite{uesato2019are} observes that UAT++ outperforms UAT-FT, we only compare UAT++ among UAT algorithms.
Figure \ref{fig:numlabels} shows the performance with varying the number of labeled data.
When the labeled data are scarce, SRST-AWR is far superior to the other competitors.
SRST-AWR maintains performance even though the size of labeled data is very small. 
In addition,
Figure \ref{fig:compare-tradeoff} shows the trade-off between the standard and robust accuracies as the regularization parameter $\lambda$ varies for SRST-AWR, RST and UAT++. 
Moreover, SRST-AWR uniformly outperforms the other competitors with large margins.
Table \ref{table:compare-performance} shows that SRST-AWR outperforms the other semi-supervised adversarial training algorithms for various benchmark data sets in terms of both standard and robust accuracies.
Experimental details of Table \ref{table:compare-performance} are provided in Appendix \ref{appB:hyper}.

\paragraph{Comparison to ARMOURED \cite{zhang2021armoured}}
We compare SRST-AWR with ARMOURED. As the official code of ARMOURED is not available, we set the architecture of SRST-AWR to be equal to that of ARMOURED and cite the performance measures of ARMOURED (standard accuracies, PGD-7, and AA) reported in the original paper. It is observed that the robust accuracy against PGD of ARMOURED is high, but not against AA, mainly due to gradient masking \cite{papernot2017practical}.
Causing gradient masking is considered as a failed defense method in terms of adversarial robustness since adversarial examples are easily generated from a model with gradient masking. Since AA contains several attack algorithms that break down models with gradient masking, it is a more reliable evaluation protocol for adversarial robustness. Table \ref{table:compare-armoured} shows that SRST-AWR outperforms ARMOURED with significant margins for CIFAR-10 and SVHN.
Experimental details of Table \ref{table:compare-armoured} are provided in Appendix \ref{appB:hyper}.

\subsection{Ablation Studies}
\label{ablation-study}
In this section, 
we investigate
how each component of the objective function (\ref{our:emp-risk}) for SRST-AWR has influence on performance - (1) effect of the semi-supervised teacher, (2) effect of the $w_{\bm{\theta}}(\bm{x} ; \beta, \bm{\theta}_T)$, 
(3) effect of the knowledge distillation (i.e. soft pseudo-labeling),
(4) sensitivity of the regularization parameter $\lambda$.
Also, we report (5) the results for fully labeled data setting to justify tightness of our bound.

Additionally, we provide the sensitivity analysis of parameter $\beta$ and $\tau$ in Appendix \ref{sensitivity-beta} and \ref{sensitivity-tau}, respectively.
Unless otherwise stated, 
the ablation studies are implemented on CIFAR-10 with 4,000 labeled data and
$(\lambda,\gamma, \tau, \beta) = ( 20, 4, 1.2, 0.5)$ in (\ref{our:emp-risk}) are used.

\subsubsection{Effect of Semi-Supervised Teacher}
\label{semi-teacher}

Table \ref{table:teacher-semi} shows that using a semi-supervised teacher improves the performance of RST \cite{carmon2019unlabeled} and UAT++ \cite{uesato2019are} on both standard and robust accuracies.
The enhancement is accomplished by assigning labels to the unlabeled data more correctly.
When using a supervised teacher, the two methods show comparable performance.
However, when using a semi-supervised teacher, RST outperforms UAT++ in terms of robustness, while UAT++ performs better than RST in terms of generalization.
However, SRST-AWR still outperforms RST and UAT++ with the semi-supervised teacher.

\begin{table}
    \small
    \caption{\textbf{Effect of Semi-supervised Teacher}. We conduct the experiments three times with different seeds and present the averages of the accuracies with the standard errors in the parenthesis.}
    \centering
    \setlength{\tabcolsep}{4pt}
    \begin{tabular}{c|ccc}
    \hline
    \multirow{2}{*}{\textbf{Method}} & \multicolumn{3}{c}{CIFAR-10 (WRN-28-5)}\\
    \cline{2-4}    
    & \textbf{Stand} & $\textbf{PGD}^{20}$ & \textbf{AA} \\
    \hline
    RST w/ sup.                              & 79.77(0.06) & 49.31(0.09) & 46.69(0.02)   \\
    RST w/ semi-sup.                         & 83.41(0.08) & 51.94(0.04) & 49.32(0.03)
   \\
   \hline
    UAT++ w/ sup.                              &  79.84(0.23) & 49.61(0.07) & 46.53(0.11) \\
    UAT++ w/ semi-sup.                         & 84.49(0.15) & 51.18(0.10) & 48.46(0.07)\\
    \hline
    RST-AWR               & 82.40(0.05) & 52.16(0.11) & 49.34(0.08) \\
    SRST-AWR              & \textbf{84.56}(0.06) & \textbf{54.41}(0.15) & \textbf{51.72}(0.06) \\
    \hline
  \end{tabular}
  \label{table:teacher-semi}
\end{table}

\subsubsection{Effect of $w_{\bm{\theta}}(\bm{x} ; \beta, \bm{\theta}_T)$ in Low-Label Regime}
\label{SRST-TRADES}

In this subsection, we compare the SRST-TRADES, which is
SRST-AWR with $w_{\bm{\theta}}(\bm{x} ; \beta, \bm{\theta}_T)=1$ for all $\bm{x}$,
and SRST-AWR for confirming the role of our proposed weight $w_{\bm{\theta}}(\bm{x} ; \beta, \bm{\theta}_T)$ in low-label regime.
The number of labeled data is 500, 2,000, 100 and 1,000 on CIFAR-10, CIFAR-100, SVHN and STL-10, respectively.
Table \ref{table:srst-awr-trades} demonstrates that SRST-AWR significantly outperforms SRST-TRADES in terms of both standard and robust accuracies.
We also compare the performance of SRST-TRADES and SRST-AWR with varying the number of labeled data on CIFAR-100 and STL-10 in Appendix \ref{compare-numlabels-cifar100-stl10}.
Details of the experimental setup are provided in Appendix \ref{appB:compare-SRST-TRADES}.


\begin{table}
    \caption{\textbf{Effect of $w_{\bm{\theta}}(\bm{x} ; \beta, \bm{\theta}_T)$} in Low-Label Regime. We conduct the experiment three times with different seeds and present the averages of the accuracies with the standard errors in the parenthesis.}
    \centering
    \setlength{\tabcolsep}{4pt}
    \begin{tabular}{c|ccc}
    \hline
    \multirow{2}{*}{\textbf{Method}} & \multicolumn{3}{c}{CIFAR-10 (WRN-28-5)}\\
    \cline{2-4}    
     & \textbf{Stand} & $\textbf{PGD}^{20}$ & \textbf{AA} \\
    \hline
    SRST-TRADES                & 79.09(0.10) & 45.51(0.15) & 43.93(0.14) \\
    SRST-AWR                   & \textbf{ 82.93}(0.06) & \textbf{52.54}(0.15) & \textbf{50.76}(0.06)  \\
    \hline
    \multirow{2}{*}{\textbf{Method}} & \multicolumn{3}{c}{CIFAR-100 (WRN-28-8)} \\
    \cline{2-4}
    & \textbf{Stand} & $\textbf{PGD}^{20}$ & \textbf{AA} \\
    \hline
    SRST-TRADES            & 29.74(0.15) & 16.51(0.21) & 14.43(0.14) \\
    SRST-AWR               & \textbf{33.65}(0.14) & \textbf{18.64}(0.17) & \textbf{16.77}(0.15) \\
    \hline
    \multirow{2}{*}{\textbf{Method}} & \multicolumn{3}{c}{SVHN (WRN-28-2)} \\
    \cline{2-4}
    & \textbf{Stand} & $\textbf{PGD}^{20}$ & \textbf{AA} \\
    \cline{2-4}
    \hline
    SRST-TRADES            & 72.85(0.11) & 47.64(0.41) & 45.08(0.31) \\
    SRST-AWR               & \textbf{80.09}(0.19) & \textbf{50.19}(0.73) & \textbf{48.69}(0.83) \\
    \hline
    \multirow{2}{*}{\textbf{Method}} & \multicolumn{3}{c}{STL-10 (WRN-28-5)} \\
    \cline{2-4}
    & \textbf{Stand} & $\textbf{PGD}^{20}$ & \textbf{AA} \\
    \cline{2-4}
    \hline
    SRST-TRADES            & 88.19(0.12) & 66.71(0.21) & 63.41(0.25) \\
    SRST-AWR               & \textbf{90.61}(0.21) & \textbf{75.85}(0.28) & \textbf{72.51}(0.27) \\
    \hline
    \hline
    \end{tabular}
    \label{table:srst-awr-trades}
\end{table}
\subsubsection{Effect of Knowledge Distillation}

To investigate the effect of knowledge distillation,
we estimate pseudo labels using a semi-supervised teacher with hard labels.
Table \ref{table:effect-kd} shows the effect of Knowledge Distillation (KD) term in our algorithm.
KD improves the robustness with retaining generalization performance.

\begin{table}
    \small
    \caption{\textbf{Effect of Knowledge Distillation}. We conduct the experiments three times with different seeds and present the averages of the accuracies with the standard errors in the parenthesis.}
    \centering
    \setlength{\tabcolsep}{4pt}
    \begin{tabular}{c|ccc}
    \hline
    \multirow{2}{*}{\textbf{Method}} & \multicolumn{3}{c}{CIFAR-10 (WRN-28-5)}\\
    \cline{2-4}    
    & \textbf{Stand} & $\textbf{PGD}^{20}$ & \textbf{AA} \\
    \hline
    SRST-AWR w/ KD  & \textbf{84.56}(0.06) & \textbf{54.41}(0.15) & \textbf{51.72}(0.06) \\
    SRST-AWR w/o KD & 84.37(0.12) & 52.77(0.09) & 50.02(0.09) \\
    \hline
    \hline
  \end{tabular}
  \label{table:effect-kd}
\end{table}


\subsubsection{Sensitivity Analysis on $\lambda$}

Figure \ref{fig:compare-tradeoff} shows the trade-off between standard accuracies and robust accuracies with respect to $\lambda$ for each algorithm.
It is observed that SRST-AWR uniformly dominates SRST-TRADES, RST-SS which is RST with the semi-supervised teacher considered in Section \ref{semi-teacher}, RST \cite{carmon2019unlabeled}, and UAT++ \cite{uesato2019are} regardless of the choice of the regularization parameter $\lambda$.

\subsubsection{Comparison to Supervised Adversarial Training Algorithms}
\label{supervised}

Table \ref{table:sup} shows that SRST-AWR with only 12\% labeled data outperforms the supervised PGD-AT \cite{madry2018towards}, TRADES \cite{zhang2019theoretically}, and MART \cite{wang2020improving} (i.e., trained with 100\% labeled data) on both standard and robust accuracies against AA, while SRST-AWR with only 8\% labeled data achieves comparable robust accuracies to the supervised TRADES, while outperforming on standard accuracy.
\begin{table}
    \footnotesize
    \caption{\textbf{Comparison to Supervised Adversarial Training Algorithms using the whole labeled data.} We conduct the experiment three times with different seeds and present the averages of the accuracies with the standard errors in the parenthesis.}
    \centering
    \scalebox{0.92}{
    \begin{tabular}{c|c|cccc}
    \hline
    \multirow{2}{*}{\textbf{Method}} & \multirow{2}{*}{\textbf{\# Labels (\%)}} & \multicolumn{3}{c}{CIFAR-10 (WRN-28-5)}\\
    \cline{3-5}    
     & & \textbf{Stand} & $\textbf{PGD}^{20}$ & \textbf{AA} \\
    \hline
    PGD-AT                              & 100 & 85.96(0.17) & 54.29(0.10) & 50.84(0.09) 
    \\
    MART                                & 100 & 80.98(0.28) & \textbf{57.56}(0.14) & 51.06(0.03) \\
    TRADES                              & 100 & 83.90(0.04) & 54.74(0.03) & 51.72(0.10) \\
    \hline
    SRST-AWR & 8   & 84.56(0.06) & 54.41(0.15) & 51.72(0.06) \\
    SRST-AWR & 12  & \textbf{86.06}(0.05) & 54.69(0.16) & \textbf{51.88}(0.08) \\    
    \hline
  \end{tabular}
  }
  \label{table:sup}
\end{table}
\subsubsection{Performance on Fully Labeled Data}

Table \ref{table:fully-labeled} shows the performance of TRADES-based methods - TRADES \cite{zhang2019theoretically} and AWP \cite{wu2020adversarial}.
AWR-based methods outperform TRADES-based methods for the given fully labeled data.
Details of the experimental setup are provided in Appendix \ref{appB:fully}.

\begin{table}
    \small
    \caption{\textbf{Performance in Fully Labeled Data}. We conduct the experiments three times with different seeds and present the averages of the accuracies with the standard errors in the parenthesis.}
    \centering
    \setlength{\tabcolsep}{4pt}
    \begin{tabular}{c|ccc}
    \hline
    \multirow{2}{*}{\textbf{Method}} & \multicolumn{3}{c}{CIFAR-10 (WRN-28-5)}\\
    \cline{2-4}    
    & \textbf{Stand} & $\textbf{PGD}^{20}$ & \textbf{AA} \\
    \hline
    TRADES  & 83.90(0.04) & 54.74(0.03) & 51.72(0.10) \\
    AWR     & \textbf{87.01}(0.10) & \textbf{55.01}(0.11) & \textbf{51.97}(0.09)  \\
    \hline
    TRADES-AWP  & 84.56(0.06) & 54.41(0.15) & 51.72(0.06) \\
    AWR-AWP & \textbf{85.81}(0.09) & \textbf{57.03}(0.14)  & \textbf{53.90}(0.12) \\
    \hline
  \end{tabular}
  \label{table:fully-labeled}
\end{table}

\section{Conclusion}

In this paper, we derived the two upper bounds of the robust risk and developed a new semi-supervised adversarial training algorithm called SRST-AWR. 
The objective function of SRST-AWR is a combination of a new surrogate version of the robust risk and a knowledge distillation term with a semi-supervised teacher.
While existing algorithms show significant performance degradation as the number of labeled data decreases, our proposed algorithm has shown relatively little performance degradation.

The experiments showed that SRST-AWR outperforms existing algorithms with significant margins in terms of both standard and robust accuracies on various benchmark datasets. Especially, we can achieve standard and robust accuracy that are comparable to supervised adversarial training algorithms when the number of labeled data is around 10\% of the whole labeled data.

\paragraph{Acknowledgement}

This work was supported by a Convergence Research Center (CRC) grant
funded by the Korean government (MSIT, No. 2022R1A5A708390811)
and by Samsung Electronics Co., Ltd.


\begin{thebibliography}{10}\itemsep=-1pt

\bibitem{andriushchenko2020square}
Maksym Andriushchenko, Francesco Croce, Nicolas Flammarion, and Matthias Hein.
\newblock Square attack: a query-efficient black-box adversarial attack via
  random searchg.
\newblock In ECCV, 2020.

\bibitem{berthelot2019mixmatch}
David Berthelot, Nicholas Carlini, Ian Goodfellow, Nicolas Papernot, Avital
  Oliver, and Colin Raffel.
\newblock Mixmatch: A holistic approach to semi-supervised learning.
\newblock In Conference on Neural Information Processing Systems (NeurIPS),
  2019.

\bibitem{carmon2019unlabeled}
Yair Carmon, Aditi Raghunathan, Schmidt Ludwig, John C~Duchi, and Percy~S
  Liang.
\newblock Unlabeled data improves adversarial robustness.
\newblock In Conference on Neural Information Processing Systems (NeurIPS),
  2019.

\bibitem{chen2017zoo}
Pin-Yu Chen, Huan Zhang, Yash Sharma, Jinfeng Yi, and Cho-Jui Hsieh.
\newblock Zoo: Zeroth order optimization based black-box attacks to deep neural
  networks without training substitute models.
\newblock In ACM, 2017.

\bibitem{coates2011stl}
Adam Coates, Andrew Ng, and Honglak Lee.
\newblock An analysis of single-layer networks in unsupervised feature
  learning.
\newblock In AISTATS, 2011.

\bibitem{croce2020minimally}
Francesco Croce and Matthias Hein.
\newblock Minimally distorted adversarial examples with a fast adaptive
  boundary attack.
\newblock In International Conference on Machine Learning (ICML), 2020.

\bibitem{croce2020reliable}
Francesco Croce and Matthias Hein.
\newblock Reliable evaluation of adversarial robustness with an ensemble of
  diverse parameter-free attacks.
\newblock In International Conference on Machine Learning (ICML), 2020.

\bibitem{cubuk2018autoaugment}
Ekin~D. Cubuk, Barret Zoph, Dandelion Mane, Vijay Vasudevan, and Quoc~V. Le.
\newblock Autoaugment: Learning augmentation policies from data.
\newblock In Computer Vision Pattern Recognition (CVPR), 2018.

\bibitem{dongyoon2023improving}
Yang Dongyoon, Kong Insung, and Kim Yongdai.
\newblock Improving adversarial robustness by putting more regularizations on
  less robust samples.
\newblock In International Conference on Machine Learning (ICML), 2023.

\bibitem{goodfellow2015explaining}
Ian~J Goodfellow, Jonathon Shlens, and Christian Szegedy.
\newblock Explaining and harnessing adversarial examples.
\newblock In International Conference on Learning Representations (ICLR), 2015.

\bibitem{gowal2021selfsupervised}
Sven Gowal, Po-Sen Huang, Aaron van~den Oord, Timothy Mann, and Pushmeet Kohli.
\newblock Self-supervised adversarial robustness for the low-label, high-data
  regime.
\newblock In International Conference on Learning Representations (ICLR), 2021.

\bibitem{guo2017on}
Chuan Guo, Geoff Pleiss, Yu Sun, and Kilian~Q. Weinberger.
\newblock On calibration of modern neural networks.
\newblock In International Conference on Machine Learning (ICML), 2017.

\bibitem{hinton2015distilling}
Geoffrey Hinton, Oriol Vinyals, and Jeff Dean.
\newblock Distilling the knowledge in a neural network.
\newblock In Conference on Neural Information Processing Systems (NeurIPS)
  Workshop, 2015.

\bibitem{ilyas2018blackbox}
Andrew Ilyas, Logan Engstrom, Anish Athalye, and Jessy Lin.
\newblock Black-box adversarial attacks with limited queries and information.
\newblock In International Conference on Machine Learning (ICML), 2018.

\bibitem{izmailov2018averaging}
Pavel Izmailov, Dmitrii Podoprikhin, Timur Garipov, Dmitry Vetrov, and
  Andrew~Gordon Wilson.
\newblock Averaging weights leads to wider optima and better generalization.
\newblock Proceedings of the international conference on Uncertainty in
  Artificial Intelligence, 2018.

\bibitem{krizhevsky2009cifar}
Alex Krizhevsky and Geoffrey Hinton.
\newblock Learning multiple layers of features from tiny images.
\newblock Technical report, University of Toronto, 2009.

\bibitem{kurakin2016adversarial}
Alexey Kurakin, Ian J~Goodfellow, and Samy Bengio.
\newblock Adversarial examples in the physical world.
\newblock In International Conference on Learning Representations (ICLR), 2017.

\bibitem{zhang2021armoured}
Kangkang Lu, , Cuong~Manh Nguyen, Xun Xu, Kiran Krishnamachari, Yu~Jing Goh,
  and Chuan-Sheng Foo.
\newblock Armoured: Adversarially robust models using unlabeled data by
  regularizing diversity.
\newblock In International Conference on Learning Representations (ICLR), 2021.

\bibitem{madry2018towards}
Aleksander Madry, Aleksandar Makelov, Ludwig Schmidt, Dimitris Tsipras, and
  Adrian Vladu.
\newblock Towards deep learning models resistant to adversarial attacks.
\newblock In International Conference on Learning Representations (ICLR), 2018.

\bibitem{miyato2017virtual}
Takeru Miyato, Shin-ichi Maeda, Masanori Koyama, and Shin Ishii.
\newblock Virtual adversarial training: A regularization method for supervised
  and semi-supervised learning.
\newblock In IEEE Transactions on Pattern Analysis and Machine Intelligence,
  2017.

\bibitem{muller2019when}
Rafael M{\"{u}}ller, Simon Kornblith, and Geoffrey~E. Hinton.
\newblock When does label smoothing help?
\newblock In Conference on Neural Information Processing Systems (NeurIPS),
  2019.

\bibitem{netzer2011svhn}
Yuval Netzer, Tao Wang, Adam Coates, Alessandro Bissacco, Bo Wu, and Andrew~Y.
  Ng.
\newblock Reading digits in natural images with unsupervised feature learning.
\newblock NIPS Workshop on Deep Learning and Unsupervised Feature Learning
  2011, 2011.

\bibitem{papernot2016transferability}
Nicolas Papernot, Patrick McDaniel, and Ian Goodfellow.
\newblock Transferability in machine learning: from phenomena to black-box
  attacks using adversarial samples.
\newblock arXiv, 2016.

\bibitem{papernot2017practical}
Nicolas Papernot, Patrick McDaniel, Ian Goodfellow, Somesh Jha, Z.~Berkay
  Celik, and Ananthram Swami.
\newblock Practical black-box attacks against machine learning.
\newblock In ACM, 2017.

\bibitem{papernot2016science}
Nicolas Papernot, Patrick McDaniel, Arunesh Sinha, and Michael Wellman.
\newblock Towards the science of security and privacy in machine learning.
\newblock 2018 IEEE European Symposium on Security and Privacy (EuroS\&P),
  2018.

\bibitem{rade2022reducing}
Rahul Rade and Seyed-Mohsen Moosavi-Dezfolli.
\newblock Recuding excessive margin to achieve a better accuracy vs. robustness
  trade-off.
\newblock In International Conference on Learning Representations (ICLR), 2022.

\bibitem{sohn2020fixmatch}
Kihyuk Sohn, David Berthelot, Chun-Liang Li, Zizhao Zhang, Nicholas Carlini,
  Ekin~D. Cubuk, Alex Kurakin, Han Zhang, and Colin Raffel.
\newblock Fixmatch: Simplifying semi-supervised learning with consistency and
  confidence.
\newblock In Conference on Neural Information Processing Systems (NeurIPS),
  2020.

\bibitem{szegedy2014intriguing}
Christian Szegedy, Wojciech Zaremba, Ilya Sutskever, Joan Bruna, Dumitru Erhan,
  Ian Goodfellow, and Rob Fergus.
\newblock Intriguing properties of neural networks.
\newblock In International Conference on Learning Representations (ICLR), 2014.

\bibitem{uesato2019are}
Jonathan Uesato, Jean{-}Baptiste Alayrac, Po{-}Sen Huang, Robert Stanforth,
  Alhussein Fawzi, and Pushmeet Kohli.
\newblock Are labels required for improving adversarial robustness?
\newblock In Conference on Neural Information Processing Systems (NeurIPS),
  2019.

\bibitem{wang2020improving}
Yisen Wang, Difan Zou, Jinfeng Yi, James Bailey, Xingjun Ma, and Quanquan Gu.
\newblock Improving adversarial robustness requires revisiting misclassified
  examples.
\newblock In International Conference on Learning Representataions(ICLR)),
  2020.

\bibitem{wu2020adversarial}
Dongxian Wu, Shu-Tao Xia, and Yisen Wang.
\newblock Adversarial weight perturbation helps robust generalization.
\newblock In Conference on Neural Information Processing Systems (NeurIPS),
  2020.

\bibitem{unsupervised2020xie}
Qizhe Xie, Zihang Dai, Eduard Hovy, Minh-Thang Luong, and Quoc~V. Le.
\newblock Unsupervised data augmentation for consistency training.
\newblock In Conference on Neural Information Processing Systems (NeurIPS),
  2020.

\bibitem{zagoruyko2016wide}
Sergey Zagoruyko and Nikos Komodakis.
\newblock Wide residual networks.
\newblock Proceedings of the British Machine Vision Conference 2016, 2016.

\bibitem{zhai2019adversarially}
Runtian Zhai, Tianle Cai, Di He, Chen Dan, Kun He, John Hopcroft, and Liwei
  Wang.
\newblock Adversarially robust generalization just requires more unlabeled
  data.
\newblock In arXiv, 2019.

\bibitem{zhang2021flexmatch}
Bowen Zhang, Yidong Wang, Wenxin Hou, Hao Wu, Jindong Wang, Manabu Okumura, and
  Takahiro Shinozaki.
\newblock Flexmatch: Boosting semi-supervised learning with curriculum pseudo
  labeling.
\newblock In Conference on Neural Information Processing Systems (NeurIPS),
  2021.

\bibitem{zhang2019theoretically}
Hongyang Zhang, Yaodong Yu, Jiantao Jiao, Eric P~Xing, Laurent El~Ghaoui, and
  Michael I~Jordan.
\newblock Theoretically principled trade-off between robustness and accuracy.
\newblock In International Conference on Machine Learning (ICML), 2019.

\bibitem{zhang2020attacks}
Jingfeng Zhang, Xilie Xu, Bo Han, Gang Niu, Lizhen Cui, Masashi Sugiyama, and
  Mohan~S. Kankanhalli.
\newblock Attacks which do not kill training make adversarial learning
  stronger.
\newblock In International Conference on Machine Learning (ICML), 2020.

\bibitem{zhang2021geometry}
Jingfeng Zhang, Jianing Zhu, Gang Niu, Bo Han, Masashi Sugiyama, and Mohan~S.
  Kankanhalli.
\newblock Geometry-aware instance-reweighted adversarial training.
\newblock In International Conference on Learning Representations (ICLR), 2021.

\end{thebibliography}

\appendix

\numberwithin{equation}{section}
\onecolumn
{\LARGE \textbf{Appendices}}

\section{Theoretical Results}
\label{appA}

In this section, we provide the proofs of Theorems \ref{thm3.1} and \ref{thm3.2}.

\begin{lemma}(\cite{dongyoon2023improving})
\label{lemma1}
Let
\begin{equation*}
    z(\bm{x}) \in \underset{\bm{x}' \in \mathcal{B}_{p}(\bm{x}, \varepsilon)}{\operatorname{argmax}} \mathbbm{1} \{ F_{\bm{\theta}}(\bm{x}) \neq F_{\bm{\theta}}(\bm{x}')\}.
\end{equation*}
Then, 
\begin{align}
    \begin{split}
    \mathbbm{1}\left\{\exists \mathbf{X}' \in \mathcal{B}_p(\mathbf{X}, \varepsilon) : F_{\bm{\theta}}(\mathbf{X})  \neq F_{\bm{\theta}}(\mathbf{X}'), F_{\bm{\theta}}(\mathbf{X}') \neq Y\right\} \\
    \leq  \mathbbm{1}\left\{F_{\bm{\theta}}(\mathbf{X}) \neq F_{\bm{\theta}}(z(\mathbf{X})), Y \neq F_{\bm{\theta}}(z(\mathbf{X}))\right\}. \label{ineq_insung}
    \end{split}
\end{align}
The equality holds when $\mathcal{Y} = \left\{-1, 1\right\}$.
\end{lemma}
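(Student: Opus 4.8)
The plan is to verify the inequality pointwise, since it compares two $\{0,1\}$-valued functions of $(\mathbf{X},Y)$. Fixing a realization $(\mathbf{X},Y)=(\bm{x},y)$, and noting the right-hand side is an indicator, it suffices to show that whenever the left-hand indicator equals $1$, so does the right-hand one. The single structural fact I would lean on is the defining property of $z(\bm{x})$: being a maximizer of $\mathbbm{1}\{F_{\bm{\theta}}(\bm{x})\neq F_{\bm{\theta}}(\cdot)\}$ over $\mathcal{B}_p(\bm{x},\varepsilon)$, it satisfies $\mathbbm{1}\{F_{\bm{\theta}}(\bm{x})\neq F_{\bm{\theta}}(z(\bm{x}))\}=\max_{\bm{x}'\in\mathcal{B}_p(\bm{x},\varepsilon)}\mathbbm{1}\{F_{\bm{\theta}}(\bm{x})\neq F_{\bm{\theta}}(\bm{x}')\}$; in words, if \emph{some} admissible perturbation changes the predicted label, then $z(\bm{x})$ already does.

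Next I would suppose the left-hand side equals $1$, so there is $\bm{x}^{\ast}\in\mathcal{B}_p(\bm{x},\varepsilon)$ with $F_{\bm{\theta}}(\bm{x}^{\ast})\neq F_{\bm{\theta}}(\bm{x})$ and $F_{\bm{\theta}}(\bm{x}^{\ast})\neq y$. The mere existence of $\bm{x}^{\ast}$, combined with the property above, already gives $F_{\bm{\theta}}(\bm{x})\neq F_{\bm{\theta}}(z(\bm{x}))$, which is the first conjunct of the right-hand indicator; so everything reduces to the second conjunct $y\neq F_{\bm{\theta}}(z(\bm{x}))$. Here I would split on whether $\bm{x}$ is correctly classified. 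If $F_{\bm{\theta}}(\bm{x})=y$, we are done immediately, since the first conjunct yields $F_{\bm{\theta}}(z(\bm{x}))\neq F_{\bm{\theta}}(\bm{x})=y$; this is precisely the regime in which the lemma is later invoked inside Theorem~\ref{thm3.1}, because on the complementary event the term $\mathbbm{1}\{Y\neq F_{\bm{\theta}}(\mathbf{X})\}$ in that bound is already $1$.

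The remaining case $F_{\bm{\theta}}(\bm{x})\neq y$ is where I expect the real difficulty to lie: the maximizer property alone only delivers $F_{\bm{\theta}}(z(\bm{x}))\neq F_{\bm{\theta}}(\bm{x})$, which, when $C\geq 3$, does not by itself preclude $F_{\bm{\theta}}(z(\bm{x}))=y$. To close this, I would exploit that $\bm{x}^{\ast}$ itself attains the maximal value $1$, hence may be taken as the witness $z(\bm{x})$, for which $F_{\bm{\theta}}(z(\bm{x}))=F_{\bm{\theta}}(\bm{x}^{\ast})\neq y$. Finally, for the equality assertion when $\mathcal{Y}=\{-1,1\}$: the reverse inequality holds for any $z$ and any number of classes, because if the right-hand indicator equals $1$ then $\bm{x}'=z(\bm{x})$ witnesses the left-hand existential; and in the two-label case the constraints $F_{\bm{\theta}}(\bm{x}^{\ast})\neq F_{\bm{\theta}}(\bm{x})$ and $F_{\bm{\theta}}(\bm{x}^{\ast})\neq y$ force $y=F_{\bm{\theta}}(\bm{x})$, so the delicate case evaporates and the two indicators coincide. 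The crux of the proof is thus entirely concentrated in securing the second conjunct $Y\neq F_{\bm{\theta}}(z(\mathbf{X}))$ for a misclassified input in the multiclass setting; the rest is a short pointwise check.
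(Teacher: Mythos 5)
Your dissection of the statement is accurate right up to the last step, and you have put your finger on exactly the right pressure point: the only nontrivial content is the second conjunct $Y \neq F_{\bm{\theta}}(z(\mathbf{X}))$ when $F_{\bm{\theta}}(\mathbf{X}) \neq Y$ and $C \geq 3$. But the move you make to close that case --- that $\bm{x}^{\ast}$ ``may be taken as the witness $z(\bm{x})$'' because it attains the maximal value --- is not available to you. The lemma fixes a selection $z(\cdot)$ from the argmax \emph{before} the inequality is asserted; $z$ is a function of $\bm{x}$ alone (downstream it becomes the PGD point), and the right-hand side changes with the choice of $z$. You cannot retroactively re-select $z(\bm{x})$ to coincide with whichever $\bm{x}^{\ast}$ happens to witness the left-hand existential, because that witness depends on $Y$. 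Indeed, for a fixed admissible selection the inequality can fail: take $C=3$, $F_{\bm{\theta}}(\bm{x})=1$, $Y=2$, a ball containing points predicted $2$ and $3$, and $z(\bm{x})$ a point predicted $2$. The left side is $1$ (witnessed by the point predicted $3$) while the right side is $\mathbbm{1}\{1\neq 2\}\cdot\mathbbm{1}\{2\neq 2\}=0$. So the step is not merely unjustified; the multiclass misclassified case cannot be closed for an arbitrary selection, and your own reduction shows this is the \emph{only} case at issue. (A symptom visible in the statement itself: the reverse inequality is trivially true for any $z$, as you note, so if the forward inequality held unconditionally the two sides would always coincide, which sits oddly with the remark that equality requires $\mathcal{Y}=\{-1,1\}$.)

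Everything else you wrote is sound: the pointwise reduction, the derivation of the first conjunct from the maximizer property, the immediate conclusion when $F_{\bm{\theta}}(\bm{x})=Y$, and both directions of the binary-case equality. Your parenthetical observation that Theorem~\ref{thm3.1} only ever needs the lemma on the event where the boundary-risk indicator is active --- i.e.\ where $F_{\bm{\theta}}(\mathbf{X})=Y$, in which case the first conjunct already forces $F_{\bm{\theta}}(z(\mathbf{X}))\neq Y$ --- is exactly the restriction under which the inequality is true and under which the downstream bounds go through; that restricted version is what you have actually proved. There is no argument in the paper to compare against: the lemma is imported from \cite{dongyoon2023improving} without proof, and the only sketch the source file contains is a binary-class argument based on the sign of $f_{\bm{\theta}}(\mathbf{X})f_{\bm{\theta}}(\mathbf{X}')$, so the multiclass fragility you ran into is a defect of the statement as quoted rather than a detour from the authors' reasoning. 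To repair your write-up, either add the hypothesis $F_{\bm{\theta}}(\mathbf{X})=Y$ to the left-hand indicator (which is how the lemma is used) or restrict to $\mathcal{Y}=\{-1,1\}$.
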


\semiarow*
\begin{proof}

Note that $\mathcal{R}_{\text{rob}}({\bm{\theta}}) = \mathcal{R}_{\text{nat}}({\bm{\theta}}) + \mathcal{R}_{\text{bdy}}({\bm{\theta}})$ where
$\mathcal{R}_{\text{nat}}(\bm{\theta}) = \mathbb{E}_{(\mathbf{X}, Y)}\mathbbm{1}\left\{ F_{\bm{\theta}}(\mathbf{X}) \neq Y \right\}$ and 
$\mathcal{R}_{\text{bdy}}(\bm{\theta}) = \mathbb{E}_{(\mathbf{X}, Y)}\mathbbm{1}\left\{\exists \mathbf{X}' \in \mathcal{B}_p(\mathbf{X}, \varepsilon) :F_{\bm{\theta}}(\mathbf{X})\neq F_{\bm{\theta}}(\mathbf{X}')   ,F_{\bm{\theta}}(\mathbf{X}) = Y   \right\}$.

Since
\begin{align*} 
\mathcal{R}_{\text{bdy}}({\bm{\theta}}) &= \mathbb{E}_{(\mathbf{X}, Y)}\mathbbm{1}\left\{\exists \mathbf{X}' \in \mathcal{B}_p(\mathbf{X}, \varepsilon) : F_{\bm{\theta}}(\mathbf{X}) \neq F_{\bm{\theta}}(\mathbf{X}'), F_{\bm{\theta}}(\mathbf{X})=Y \right\} \\ 
& \leq \mathbb{E}_{(\mathbf{X}, Y)}\mathbbm{1}\left\{F_{\bm{\theta}}(\mathbf{X}) \neq F_{\bm{\theta}}(z(\mathbf{X})), Y \neq F_{\bm{\theta}}(z(\mathbf{X}))\right\} (\because \text{\;Lemma\;} \ref{lemma1})\\
& = \mathbb{E}_{\mathbf{X}} \mathbbm{1} \left\{ F_{\bm{\theta}}(\mathbf{X}) \neq F_{\bm{\theta}}(z(\mathbf{X})) \right\} \mathbb{E}_{Y|\mathbf{X}} \mathbbm{1}\left\{ Y \neq F_{\bm{\theta}}(z(\mathbf{X})) \right\} \\ 
& = \mathbb{E}_{\mathbf{X}} \left\{ \mathbbm{1} \left\{ F_{\bm{\theta}}(\mathbf{X}) \neq F_{\bm{\theta}}(z(\mathbf{X})) \right\} \cdot p(Y \neq F_{\bm{\theta}}(z(\mathbf{X}))|\mathbf{X}) \right\},
\end{align*}
the inequality (\ref{semi-arow:ub}) holds.
\end{proof}

\semicow*
\begin{proof}
It suffices to show that $\mathcal{R}_{\text{bdy}}({\bm{\theta}}) \leq \mathbb{E}_{\mathbf{X}}\left\{{\mathbbm{1} \{ F_{\bm{\theta}}(\mathbf{X}) \neq F_{\bm{\theta}}(z(\mathbf{X}'))\}} \cdot  p(\mathbf{Y}=F_{\bm{\theta}}(\mathbf{X})|\mathbf{X})\right\}$.

Since
\begin{align*}
\mathcal{R}_{\text{bdy}}(\bm{\theta}) &= \mathbb{E}_{(\mathbf{X}, \mathbf{Y})}\mathbbm{1}\left\{\exists \mathbf{X}' \in \mathcal{B}_p(\mathbf{X}, \varepsilon) : F_{\bm{\theta}}(\mathbf{X}) \neq F_{\bm{\theta}}(\mathbf{X}'), F_{\bm{\theta}}(\mathbf{X})=\mathbf{Y} \right\} \\
& = \mathbb{E}_{(\mathbf{X}, \mathbf{Y})} \mathbbm{1} \left\{\exists \mathbf{X}'\in \mathcal{B}_p(\mathbf{X}, \varepsilon) : F_{\bm{\theta}}(\mathbf{X}') \neq F_{\bm{\theta}}(\mathbf{X}) \right\} \mathbbm{1}\left\{ \mathbf{Y} = F_{\bm{\theta}}(\mathbf{X}) \right\} \\ 
& =  \mathbb{E}_{(\mathbf{X}, \mathbf{Y})} {\mathbbm{1}\{ F_{\bm{\theta}}(\mathbf{X}) \neq F_{\bm{\theta}}(z(\mathbf{X})) \} } \mathbbm{1} \left\{ \mathbf{Y} = F_{\bm{\theta}}(\mathbf{X})\right\} \\
& = \mathbb{E}_{\mathbf{X}} \mathbbm{1}\{F_{\bm{\theta}}(\mathbf{X}) \neq F_{\bm{\theta}}(z(\mathbf{X})) \} \cdot \mathbb{E}_{\mathbf{Y|X}} \mathbbm{1} \left\{ \mathbf{Y} = F_{\bm{\theta}}(\mathbf{X})\right\} \\
& = \mathbb{E}_{\mathbf{X}} \left\{ \mathbbm{1}(F_{\bm{\theta}}(\mathbf{X}) \neq F_{\bm{\theta}}(\mathbf{X}') ) \cdot p(\mathbf{Y}=F_{\bm{\theta}}(\mathbf{X})| \mathbf{X}) \right\},
\end{align*}
the inequality (\ref{semi-cow:ub}) holds.
\end{proof}

\section{Experimental Setup}
\label{appB}

\subsection{Hyperparameter setting}
\label{appB:hyper}
\begin{table}[H]
    \centering
    \caption{\textbf{Selected hyperparameters.} Hyperparameters used in the numerical studies in Table \ref{table:compare-performance} and \ref{table:compare-armoured}.}
    \begin{tabular}{c|c|c|ccccc}
    \hline
     Dataset& Model & $\textbf{Method}$ & $ \lambda$ & $\gamma$  & $\beta$ & $\tau$ & Teacher  \\
    \hline
    \hline
    \multirow{3}{*}{\text{CIFAR-10}} & \multirow{3}{*}{\text{WRN-28-5,2}}
      &  RST & 5 & - & - & -   &  Supervised \\
    & & UAT++      & 5 & - & - & - &  Supervised \\
    & & SRST-AWR  & 20 & 4 & 0.5 & 1.2 &  FixMatch \\
    \hline
    \multirow{3}{*}{\text{CIFAR-100}} & \multirow{3}{*}{\text{WRN-28-8}}   
      &  RST & 5 & - & - & -   &  Supervised \\
    & & UAT++      & 5 & - & - & - &  Supervised \\
    & & SRST-AWR & 20 & 4 & 0.5 & 1.0 &  FixMatch \\
    \hline
    \multirow{3}{*}{\text{SVHN}} & \multirow{3}{*}{\text{WRN-28-2}} 
      &  RST & 5 & - & - & -   &  Supervised \\
    & & UAT++      & 5 & - & - & - &  Supervised \\
    & & SRST-AWR & 15 & 4 & 0.5 & 1.0 &  FixMatch \\
    \hline
    \multirow{3}{*}{\text{STL-10}} & \multirow{3}{*}{\text{WRN-28-5}} 
      &  RST & 5 & - & - & -   &  Supervised \\
    & & UAT++      & 5 & - & - & - &  Supervised \\
    & & SRST-AWR & 8 & 4 & 0.5 & 1.0 &  FixMatch \\
    \hline
    \end{tabular}
    \label{table_hyper}
\end{table}

Table \ref{table_hyper} presents the hyperparameters used in Table \ref{table:compare-performance} and \ref{table:compare-armoured}.
Most of the hyperparameters are set to be the ones used in the previous studies.

\subsection{Loss for Generating Adversarial Examples}

As described in Section \ref{adversarial-attack},
KL-divergence and cross-entropy loss can be used to generate the adversarial examples.
When using KL divergence, adversarial examples are generated based on the current predictions, whereas when using cross-entropy, the target label is required.
In our experimental setting, 
we use the cross-entropy loss with target labels which are predicted by the teacher models except for CIFAR-10 since KL-divergence are not stable.

\subsection{The teacher models}
\label{appB:teacher-models}

\subsubsection{Hyperparameter setting}
\begin{table}[H]
    \small
    \centering
    \caption{\textbf{Selected hyperparameters.} Hyperparameters used in the numerical studies in Section \ref{sec4}.}
    \label{table:hyper_teacher}
    \begin{tabular}{c|c|c|cccccc}
    \hline
     Dataset & Model & $\textbf{Method}$ & $ \lambda$ &  Weight Decay & $\tau$ & \text{num\_labels} & batch size (labeled, unlabeled)  \\
    \hline
    \hline
    \text{CIFAR-10} & \text{WRN-28-5} & FixMatch & 1 & $5e^{-4}$ & 0.95 & 4,000 & (64, 128)    \\
    \hline
    \text{CIFAR-100} & \text{WRN-28-8} & FixMatch & 1 & $5e^{-4}$ & 0.95 & 4,000 & (64, 128)  \\
    \hline
    \text{SVHN} & \text{WRN-28-2} & FixMatch & 1 & $5e^{-4}$ & 0.95 & 1,000 & (64, 128) \\
    \hline
    \text{STL-10} & \text{WRN-28-5} & FixMatch & 1 & $5e^{-4}$ & 0.95 & 1,000 & (64, 128) \\
    \hline
    \end{tabular}
\end{table}

\begin{table}[H]
    \caption{\textbf{Performance of Teachers}.}
    \label{table:teacher-performance}
    \centering
    \begin{tabular}{c|c|c|c|c}
    \hline
    & CIFAR-10 ($n_l=4,000$)& CIFAR-100 ($n_l=4,000$) & SVHN ($n_l=1,000$) & STL-10 ($n_l=1,000$)\\
    \hline
    Supervised  & 81.82 & 45.96 & 83.50 & 56.60\\
    FixMatch    & 95.87 & 64.82 & 97.11 & 92.46 \\
    \hline
    \end{tabular}
\end{table}
We train the model using FixMatch. 
Table \ref{table:hyper_teacher} and \ref{table:teacher-performance} shows the selected hyperparmeters for training teacher models and performance of them.

\subsection{Comparison SRST-AWR to SRST-TRADES}
\label{appB:compare-SRST-TRADES}
\begin{table}[H]
    \centering
    \caption{\textbf{Selected hyperparameters.} Hyperparameters used in the numerical studies in Table \ref{table:srst-awr-trades}.}
    \begin{tabular}{c|c|c|ccccc}
    \hline
    Dataset & Model & $\textbf{Method}$ & $ \lambda$ & $\gamma$  &  $\beta$ & $\tau$ & Teacher  \\
    \hline
    \hline
    \multirow{2}{*}{\text{CIFAR-10}}  & \multirow{2}{*}{\text{WRN-28-5}}
    & SRST-TRADES       & 12 & 4 & - & 1.2 &  FixMatch \\
    & & SRST-AWR          & 20 & 4 & 0.5 & 1.2 &  FixMatch \\
    \hline
    \multirow{2}{*}{\text{CIFAR-100}}  & \multirow{2}{*}{\text{WRN-28-8}}
    & SRST-TRADES       & 20 & 4 & - & 1.0 &  FixMatch \\
    & & SRST-AWR          & 20 & 4 & 0.5 & 1.0 &  FixMatch \\
    \hline
    \multirow{2}{*}{\text{STL-10}}  & \multirow{2}{*}{\text{WRN-28-5}}
    & SRST-TRADES       & 8 & 4 & - & 1.0 &  FixMatch \\
    & & SRST-AWR          & 8 & 4 & 0.5 & 1.0 &  FixMatch \\
    \hline
    \end{tabular}
    \label{table:hyper-awr-trades}
\end{table}

Table \ref{table:hyper-awr-trades} presents the hyperparameters used in Table \ref{table:srst-awr-trades}.

\subsection{Comparison SRST-AWR to Other Competitors in Supervised Setting}
\label{appB:compare-sup}
\begin{table}[H]
    \centering
    \caption{\textbf{Selected hyperparameters.} Hyperparameters used in the numerical studies in Table \ref{table:sup}.}
    \label{hyper_sup}
    \begin{tabular}{c|c|cccccc}
    \hline
    $\textbf{Method}$ & $ \lambda$ & $\gamma$ & $\beta$  & $\tau$ & $\#$ of labeled data  \\
    \hline
    \hline
    PGD-AT                 & - & - & - & - & 50,000(100\%)  \\
    TRADES                 & 6 & - & - & - & 50,000(100\%)  \\
    MART                   & 4 & - & - & - & 50,000(100\%)  \\
    SRST-AWR               & 20 & 4 & 0.5 &  1.2 & 4,000(8\%) \\
    SRST-AWR               & 20 & 4 & 0.5 &  1.2 & 6,000(12\%) \\
    \hline
    \hline
    \end{tabular}
\end{table}

Table \ref{hyper_sup} presents the hyperparameters used in Table \ref{table:compare-performance}.
Most of the hyperparameters are set to be the ones used in the
previous studies \cite{madry2018towards, zhang2019theoretically, wang2020improving}.

\subsection{Performance on Fully Labeled Data} 
\label{appB:fully}

\begin{table}[H]
    \centering
    \caption{\textbf{Selected hyperparameters.} Hyperparameters used in the numerical studies in Table \ref{table:fully-labeled}.}
    \label{hyper_fully}
    \begin{tabular}{c|cc}
    \hline
    $\textbf{Method}$ & $ \lambda$ & $\#$ of labeled data  \\
    \hline
    \hline
    TRADES                 & 6  & 50,000(100\%)  \\
    AWR                    & 9  & 50,000(100\%)  \\
    \hline
    AWP-TRADES             & 6 & 50,000(100\%)  \\
    AWP-AWR                & 9 & 50,000(100\%)  \\
    \hline
    \hline
    \end{tabular}
\end{table}

Table \ref{hyper_fully} presents the hyperparameters used in Table \ref{table:fully-labeled}. We select models with maximized robust accuracies against PGD$^{10}$.

\section{Additional Experiments}
\subsection{Effect of knowledge distillation}
\label{knowledge-sup-semi}
We conduct a comparison of knowledge distillation in both supervised and semi-supervised settings using the same architecture for the teacher and student models. 
For the supervised setting, we set $\alpha$ and $\tau$ to 0.9 and 20, respectively, as suggested in \cite{hinton2015distilling}. 
For the semi-supervised setting, we perform a grid search and found that setting $\alpha$ to 0.9 and $\tau$ to 1.1 yields the optimal results.
The results are presented in Table \ref{table:knowledge-sup-semi}, which show that while the student model can outperform the teacher model in the supervised setting, it cannot achieve comparable performance to the teacher model in the semi-supervised setting.

\begin{table}[H]
    \centering
    \caption{\textbf{The Effect of Knowledge Distillation}.}
    \begin{tabular}{c|cccc}
    \hline
    $\textbf{Setting}$ & $ \text{Teacher Acc.} $ & \text{Student Acc.} & Diff. &  \# of labeled data \\
    \hline
    \hline
    Supervised             & 96.04(0.11) & 96.14(0.09) & +0.1 & 50,000(100\%) \\
    Semi-supervised        & 95.87(0.05) & 95.37(0.04) & -0.5 & 4,000(8\%) \\
    \hline
    \end{tabular}
    \label{table:knowledge-sup-semi}
\end{table}

\subsection{The comparison SRST-AWR and SRST-TRADES with varying the number of labeled data}
\label{compare-numlabels-cifar100-stl10}

We compare the performance of SRST-AWR and SRST-TRADES with varying numbers of labeled data for CIFAR-100 and STL-10 to assess the effect of $w_{\bm{\theta}}(\bm{x} ; \beta, \bm{\theta}_T)$.
Figure \ref{fig:num-labels-cifar100} and \ref{fig:num-labels-stl10} present the results, showing that SRST-AWR consistently outperforms SRST-TRADES for both datasets across all labeled data sizes. For CIFAR-100, we use 4,000, 6,000, 10,000, and 15,000 labeled data , while for STL-10, we use 100, 250, 500, 1,000 and 2,000 labeled data. For both datasets, we observe that both standard and robust accuracies against AA 
improve
as the number of labeled data increases.
The margins between SRST-AWR and SRST-TRADES are relatively high, especially when the number of labeled data is 4,000 on CIFAR-100 and 1,000 on STL-10. Overall, our results demonstrate that SRST-AWR can outperform SRST-TRADES, even with limited amounts of labeled data, indicating the efficacy of $w_{\bm{\theta}}(\bm{x} ; \beta, \bm{\theta}_T)$ in improving adversarial robustness.

\begin{figure}[H]
\begin{center}
\begin{minipage}[c]{0.45\linewidth}
\includegraphics[width=\linewidth]{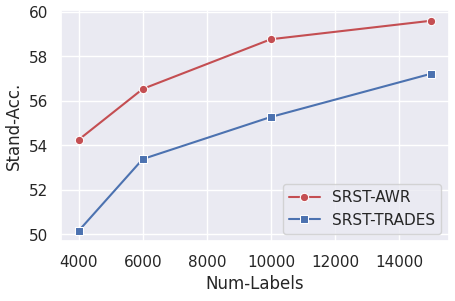}
\caption*{(a) y-axis : standard accuracies}
\end{minipage}
\hfill
\begin{minipage}[c]{0.45\linewidth}
\includegraphics[width=\linewidth]{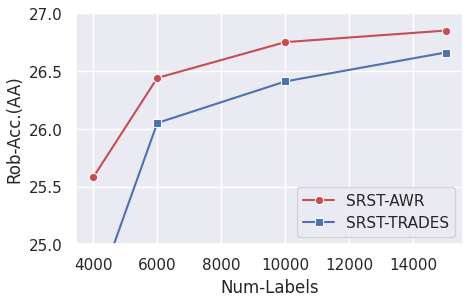}
\caption*{(b) y-axis : robust accuracies}
\end{minipage}
\caption{\textbf{Comparison SRST-AWR to SRST-TRADES with varying the number of labeled data on CIFAR100}.
The $x$-axis and $y$-axis are the number of labeled data and performance measure - standard accuracies and robust accuracies against AA , respectively.}
\label{fig:num-labels-cifar100}
\end{center}
\end{figure}
\begin{figure}[H]
\begin{center}
\begin{minipage}[c]{0.45\linewidth}
\includegraphics[width=\linewidth]{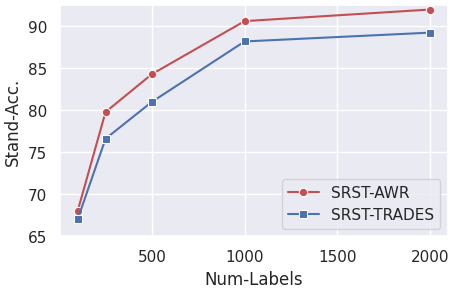}
\caption*{(a) y-axis : standard accuracy}
\end{minipage}
\hfill
\begin{minipage}[c]{0.45\linewidth}
\includegraphics[width=\linewidth]{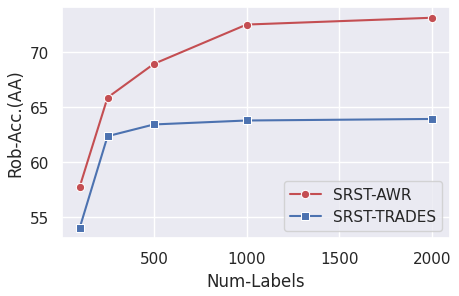}
\caption*{(b) y-axis : robust accuracy}
\end{minipage}
\caption{\textbf{Comparison SRST-AWR to SRST-TRADES with varying the number of labeled data on STL-10}.
The $x$-axis and $y$-axis are the number of labeled data and performance measure - standard accuracies and robust accuracies against AA , respectively.}
\label{fig:num-labels-stl10}
\end{center}
\end{figure}

\subsection{Sensitivity Analysis on $\beta$}
\label{sensitivity-beta}
In this subsection, we perform a sensitivity analysis on $\beta$. Figure \ref{sensitivity:beta} illustrates that the highest level of robustness can be attained at $\beta=0.5$ on CIFAR-10. Additionally, standard accuracy remains relatively high when $\beta$ is in the range of [0, 0.5].

\begin{figure}[H]
    \centering
    \includegraphics[width=8cm, height=5cm]{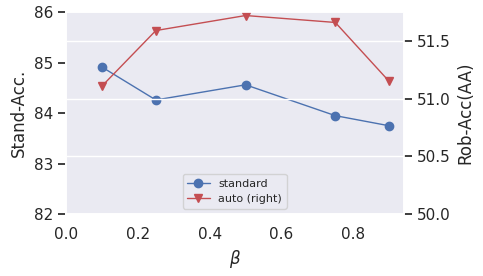}
    \caption{\textbf{Sensitivity Analysis for $\beta$}. 
    We vary $\beta$ from 0 to 1 in SRST-AWR.
    The $x$-axis is $\beta$
    and $y$-axes are  standard accuracy and robust accuracy against AA, respectively.}
\label{sensitivity:beta}
\end{figure}
\subsection{Sensitivity Analysis of Temperature $\tau$}
\label{sensitivity-tau}
We perform a sensitivity analysis on the temperature parameter $\tau$ for knowledge distillation on CIFAR-10. Figure \ref{sensitivity:tau} demonstrates that the selection of $\tau$ affects both standard and robust accuracies. Specifically, increasing $\tau$ to 1.4 enhances robustness but deteriorates generalization. On the other hand, if $\tau$ exceeds 1.2, both robustness and generalization decline. Therefore, the results obtained with $\tau = 1.2$ are more favorable compared to other choices for enhancing robustness on CIFAR-10.

\begin{figure}[H]
    \centering
    \includegraphics[width=8cm, height=5cm]{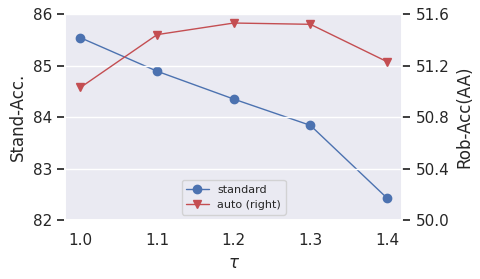}
    \caption{\textbf{Sensitivity Analysis for $\tau$}. 
    We vary $\tau$ from 1 to 1.4 in SRST-AWR.
    The $x$-axis is $\tau$
    and $y$-axes are standard accuracy and robust accuracy against AA, respectively.}
    \label{sensitivity:tau}
\end{figure}

\end{document}